\documentclass[sn-basic,iicol]{sn-jnl}
\usepackage{graphicx}%
\usepackage{multirow}%
\usepackage{amsmath,amssymb,amsfonts}%
\usepackage{amsthm}%
\usepackage{mathrsfs}%
\usepackage[title]{appendix}%
\usepackage{xcolor}%
\usepackage{textcomp}%
\usepackage{manyfoot}%
\usepackage{booktabs}%
\usepackage{algorithm}%
\usepackage{algorithmicx}%
\usepackage{algpseudocode}%
\usepackage{listings}%

\newcommand{\ie}{i.e.}

\newcommand{\MC}{\mathcal}
\newcommand{\MBB}{\mathbb}

\usepackage{bm,subfigure}



\theoremstyle{thmstyleone}%
\newtheorem{theorem}{Theorem}
%
\theoremstyle{thmstyletwo}%
\newtheorem{remark}{Remark}%
\newtheorem{lemma}{Lemma}%
\newtheorem{corollary}{Corollary}%
\theoremstyle{thmstylethree}%
\newtheorem{definition}{Definition}%
\usepackage{makecell}
\raggedbottom
\begin{document}

\title[Article Title]{Towards Unsupervised Domain Adaptation via Domain-Transformer}

\author*[1]{\fnm{Chuan-Xian} \sur{Ren}}\email{rchuanx@mail.sysu.edu.cn}

\author[1]{\fnm{Yiming} \sur{Zhai}}\email{zhaiym3@mail2.sysu.edu.cn}
\author[1]{\fnm{You-Wei} \sur{Luo}}\email{luoyw28@mail2.sysu.edu.cn}
\author[2]{\fnm{Hong} \sur{Yan}}\email{h.yan@cityu.edu.hk}

\affil[1]{\orgdiv{School of Mathematics}, \orgname{Sun Yat-Sen University}, \orgaddress{\street{XingangRoad}, \city{Guangzhou}, \postcode{510275}, \state{Guangdong}, \country{China}}}
\affil[2]{\orgdiv{Department of Electrical and Engineering}, \orgname{City University of Hong Kong}, \orgaddress{\street{83 Tat Chee Avenue}, \city{Kowloon}, \state{Hong Kong}}}

\abstract{As a vital problem in pattern analysis and machine intelligence, Unsupervised Domain Adaptation (UDA) attempts to transfer an effective feature learner from a labeled source domain to an unlabeled target domain. Inspired by the success of the Transformer, several advances in UDA are achieved by adopting pure transformers as network architectures, but such a simple application can only capture patch-level information and lacks interpretability. To address these issues, we propose the Domain-Transformer (DoT) with domain-level attention mechanism to capture the long-range correspondence between the cross-domain samples. On the theoretical side, we provide a mathematical understanding of DoT: 1) We connect the domain-level attention with optimal transport theory, which provides interpretability from Wasserstein geometry; 2) From the perspective of learning theory, Wasserstein distance-based generalization bounds are derived, which explains the effectiveness of DoT for knowledge transfer. On the methodological side, DoT integrates the domain-level attention and manifold structure regularization, which characterize the sample-level information and locality consistency for cross-domain cluster structures. Besides, the domain-level attention mechanism can be used as a plug-and-play module, so DoT can be implemented under different neural network architectures. Instead of explicitly modeling the distribution discrepancy at domain-level or class-level, DoT learns transferable features under the guidance of long-range correspondence, so it is free of pseudo-labels and explicit domain discrepancy optimization. Extensive experiment results on several benchmark datasets validate the effectiveness of DoT.}

\keywords{Feature learning, Domain adaptation, Discriminative analysis, Attention, Sample correspondence}
\maketitle

\section{Introduction}

Deep learning methods have achieved remarkable progress on various computer vision tasks, such as image classification~\citep{DR_CNN}, image segmentation~\citep{Long_2015_CVPR}, and object recognition~\citep{he2016deep}. It is worth noting that deep learning methods are highly dependent on large-scale labeled datasets and identical distribution assumption~\citep{long2019transferable}. However, data collected from the real-world scenarios are usually unlabeled, and manual annotation is always expensive and time-consuming. Moreover, there exists a domain shift since the training set and the test set may be collected with different visual conditions, such as sensors, backgrounds and views~\citep{pan2010domain,courty2016optimal}. To deal with the shortage of labeled data, a natural idea is transferring the knowledge from a label-rich domain (i.e., source domain) to a label-scarce domain (i.e., target domain). This is usually called Unsupervised Domain Adaptation (UDA) problem~\citep{pan2009survey,long2019transferable,yang2022heterogeneous}.

Generally, with the labeled source samples, UDA methods attempt to learn flexible feature representations that can generalize well to the unlabeled target domain. The UDA problem commonly assumes that the feature spaces of the source and target domains are the same but the marginal distributions are different, i.e., $P_X^s\neq P_X^t$. The rigorous transfer theory~\citep{ben2010theory} shows that the classifier's target error is mainly bounded by its source error plus the divergence between domains. Under the guidance of this theory, a major line of methods attempt to learn domain-invariant features by simultaneously minimizing the source error and statistical distribution discrepancy between domains, e.g., Maximum Mean Discrepancy (MMD)~\citep{pan2010domain,long2019transferable}, statistic moment matching~\citep{sun2016return}, manifold alignment~\citep{gong2012geodesic} and adversarial adaptation~\citep{ganin2016domain,ren2019domain}. Another fruitful line is based on Optimal Transport (OT) \citep{zhang2019optimal,li2020enhanced}, which matches the marginal distributions under a geometric analysis framework. With the marginal distribution constrains on the transport plan $\bm{\gamma}$, OT-based methods learn transferable features via a pair-wise matching between samples across domains.

\begin{figure}[t!]
\centering
\includegraphics[width=1\linewidth]{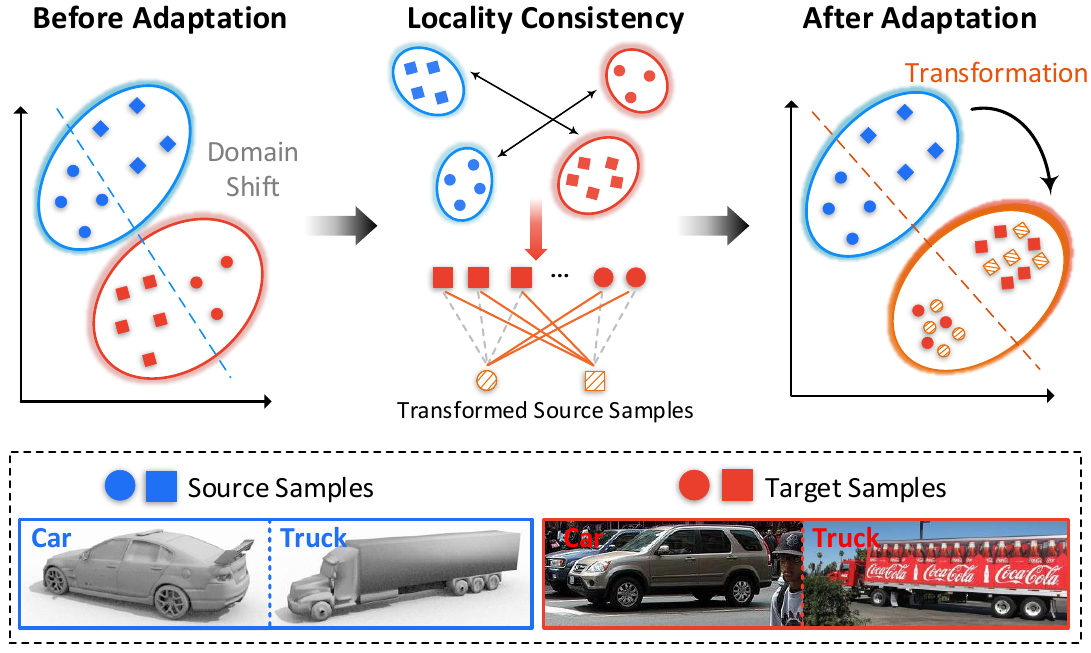}
\caption{To deal with UDA, our DoT learns locality consistency across domains, and then transforms the source domain into the target space to reduce the misclassification rate on the target domain. In the locality consistency part, black arrows imply alignment between class-relevant clusters across domains, orange solid lines imply large weights for intra-class samples, and gray dashed lines imply small and even zero weights for inter-class samples.}
\label{fig:UDA_OT}
\end{figure}

Extended from the marginal shift assumption, domain alignment with local structure consistency has been studied recently based on the conditional shift assumption~\citep{zhang2013}, i.e., $P^s_{X|Y} \neq P^t_{X|Y}$. As illustrated in the upper middle of Fig.~\ref{fig:UDA_OT}, locality consistency attempts to align class-relevant clusters and keep samples within the same class nearby in the latent feature space. \citet{Combes2020domain} have demonstrated the necessity of reducing conditional shift for UDA. Several works explore the locality consistency across domains by matching conditional and even joint distributions, e.g., extensions of MMD \citep{long-JAN,zhu2021deep}, conditional adversarial adaptation~\citep{long2018conditional}, centroid alignment \citep{yang2022heterogeneous}, OT-based joint distribution matching~\citep{jdot2017,Damodaran_2018_ECCV}, and conditional statistic alignment~\citep{BuresNet2023}. Most of these methods rely on target pseudo-labels since the target domain is unlabeled. Though false pseudo-labels are expected to be iteratively corrected in the training process, obtaining reliable pseudo-labels may be associated with several hyper-parameters and manually assigned threshold values. These phases introduce extra risk and complexity into the model training. Thus, aligning domains with locality consistency is still fundamental and challenging in dealing with UDA.

Transformer has achieved immense success in natural language processing (NLP)~\citep{vaswani2017attention} and computer vision~\citep{dosovitskiy2021an,DeiT}, which further inspires the recent advances in UDA~\citep{ma2021WinTR,yang2023tvt,xu2022cdtrans}. These methods focus on applying the self-attention~\citep{ma2021WinTR,yang2023tvt} or cross-attention~\citep{xu2022cdtrans} to learn transferrable representations. However, the simple application of vision transformers can only build interactions between image patches and cannot characterize relations between cross-domain samples. Thus, it is insufficient to learn the locality consistency for successful transfer. Besides, there lacks understanding on the effects of transformers for UDA, which leads to weak interpretability of these recent advances and limits further explorations of transformer-based UDA methodology.

In this paper, we propose a novel method called Domain-Transformer (DoT) for UDA, which consists of two core mechanisms, i.e., the domain-level attention and manifold regularization. It is worth noting that different from existing vision transformer-based UDA methods, which directly apply pure transformer framework and can only capture patch-level information, DoT characterizes the sample correspondence across domains by the proposed domain-level cross-attention mechanism. From theoretical aspects, we connect the core mechanism (i.e., domain-level attention) with OT theory, and then derive the Wasserstein distance-based generalization bounds; the main results provide a theoretical understanding and guarantees for the DoT model. Intuitively, domain-level attention builds intrinsic relationships across domains, and then transforms the source samples onto the target domain. In other words, the transformed source samples can be represented by the most similar target samples, as illustrated in Fig.~\ref{fig:UDA_OT}.

To the best of our knowledge, DoT is the first effort to connect an attention mechanism with the OT theory. The main contributions of this paper are summarized as follows:
\begin{itemize}
\item[\small\textbullet] Instead of explicitly modeling the domain discrepancy or directly applying pure transformer frameworks, DoT formulates the long-range dependency across domains by the proposed domain-level attention mechanism, which learns shared features and ensures locality consistency for knowledge transfer.

\item[\small\textbullet] The domain-level attention mechanism is theoretically connected with the OT theory. The long-range dependency across domains can be mathematically deduced as the transport map in OT. Thus, DoT has interpretability from the perspective of Wasserstein geometry.

\item[\small\textbullet] From the perspective of learning theory, informative generalization bounds are derived based on the Wasserstein distance. It bridges the gap between the domain-level attention mechanism and the generalization error in UDA, and ensures that DoT is effective in learning transferable features.

\item[\small\textbullet] DoT is free of pseudo-labels in the target domain, which avoids the uncertainty in the pseudo-labeling procedure, and it can be used as a plug-and-play module for different network architectures. Extensive experiment results validate the effectiveness of DoT in tackling the UDA problem.
\end{itemize}

The rest of this paper is organized as follows. Section~\ref{sect:related-work} reviews existing UDA methods and transformers-based UDA advances. Section~\ref{sect:method} depicts the proposed DoT and its connection with the OT theory. Section~\ref{sect:experim} presents extensive experimental results and Section~\ref{sect:results-analysis} presents method analysis. Section~\ref{sect:conclusion} concludes the paper.

\section{Related Work}\label{sect:related-work}

In this section, we introduce some pioneering works on UDA and vision transformers, and compare the proposed DoT with these methods.

\subsection{Unsupervised Domain Adaptation}
Existing UDA methods mainly learn domain-invariant features under different assumptions.

Based on the covariate shift prior, explicit discrepancy optimization and adversarial training are the most common methods for marginal distribution alignment. The explicit discrepancy optimization methods align statistic moments of the source and target domains, e.g., MMD-based methods~\citep{pan2010domain,long2019transferable}and covariance-based method~\citep{sun2016return}. For adversarial training, \cite{ganin2016domain} provide Domain Adversarial Neural Networks (DANN), which minimize the loss of the category classifier and maximize the loss of the domain classifier.

Based on the conditional shift assumption, an increasing number of works attempt to learn locality consistency by incorporating the label information. Since the target domain is unlabeled, the pseudo-labeling technique is widely used in these methods. \citet{long2018conditional} define a domain discriminator conditioned on the classifier predictions in Conditional Domain Adversarial Network (CDAN). \citet{Pan_2019_CVPR} propose a Transferable Prototypical Network (TPN), which minimizes the distances between prototypes of the source domain with labels and target domain with pseudo-labels. \citet{luo2022unsupervised} propose a Discriminative Manifold Propagation (DMP) method, which generalizes Fisher's discriminant criterion by exploiting the between-class scatter in the source domain and the within-class scatter in the target domain. \citet{zhu2021deep} provide a local MMD, then match class-relevant subdomains by proposing a Deep Subdomain Adaptation Network (DSAN). \citet{ATM} propose a Maximum Density Divergence, then provide an Adversarial Tight Match (ATM) method based on this metric to maximize the inter-domain divergence and intra-class density. Based on pseudo-labels, \citet{kumar2023improving} swap the source image style with that of the target image using Class Aware Frequency Transformation (CAFT). \citet{yang2022heterogeneous} propose a graph attention network, which transfers the local structure information by aligning the class-wise centroids across domains. To identify transferable image regions, \citet{wang2019transferable} present Transferable Attention for Domain Adaptation (TADA), where the local attention is generated by a patch-wise domain discriminator.

Another fruitful line for UDA methodology is based on OT, which has a solid theoretical foundation in graph theory and optimization. \citet{courty2016optimal} align domains by solving the optimal transport plan of an entropy-regularized OT model. \citet{zhang2019optimal} provide a theoretical framework of OT in reproducing kernel Hilbert spaces, then match the domain distributions via a kernel-based transport map. These OT-based methods tend to align domains via pair-wise matching between cross-domain samples while ignoring the real influence of label information. To make full use of the label-based locality structure, \citet{jdot2017} propose the Joint Distribution Optimal Transport (JDOT) and \citet{Damodaran_2018_ECCV} extend it to the deep learning framework as DeepJDOT. They are the first attempt to seek the optimal classifier for the target domain by minimizing the optimal transport distance between the joint distributions. \citet{li2020enhanced} propose an Enhanced Transport Distance (ETD) method, which reweighs the transport distance with the similarity between feature vectors from different domains. \citet{RWOT2020} present Reliable Weighted Optimal Transport (RWOT) via exploiting spatial prototypical information and intra-domain strcuture. \citet{xia2020structure} leverages Gromov-Wasserstein distance to extract cross-domain matching relation. Inspired by~\citet{zhang2019optimal}, \citet{BuresNet2023} propose the Conditional Kernel Bures metric in BuresNet for characterizing the class-wise domain discrepancy. 

Some methods attempt to deal with UDA via exploring the learning capability of vision-language models like CLIP~\citep{CLIP}. \citet{Padclip} focus on the catastrophic forgetting issue of CLIP in adaptation learning process, and propose a dynamically adjust strategy for learning rate to avoid excessive training and overcome the performance degradation in knowledge transfer.  Instead of performing adaptation on the visual or textual modality independently, \citet{li2024memonav} unify the language-associated and vision-associated knowledge in CLIP, where the domain gap is effectively reduced by the developed modality-ensemble training method.

Different from current UDA models, which usually introduce explicit domain discrepancy optimization and pseudo-label procedure, DoT learns the locality consistency and reduces the generalization error via the domain-level attention automatically. DoT is more than an attention-based re-presentation method since we focus on the theoretical analysis of the domain-level attention mechanism. By drawing on OT theory and statistical learning theory, the effectiveness and interpretability of DoT can be guaranteed.

\subsection{Transformer-based UDA methods}
In NLP, the standard Transformer~\citep{vaswani2017attention} and its variants~\citep{BigBird2020,dosovitskiy2021an,Liu_swin} are built upon the attention mechanism, which maps a query $\bm{q}$ and a set of key-value pairs ($\bm{k},\bm{v}$) to formulate a series of matching probabilistic weights. The probabilistic weight assigned to each value depends on the similarity between the query and the corresponding key. To extend the Transformers to images-based learning tasks, Vision Transformer (ViT)~\citep{dosovitskiy2021an} and its variants~\citep{Liu_swin,DeiT} usually split each image into $N$ patches, which are then formulated as feature matrix $\mathbf{Z} \in \mathbb{R}^{N \times d}$. Different projections are applied on $\mathbf{Z}$ to obtain $\mathbf{Q}$, $\mathbf{K}$ and $\mathbf{V} \in \mathbb{R}^{N \times d_h}$. $d$ and $d_h$ indicate their dimensions. Then, the self-attention mechanism in vision transformers is formulated as
\begin{align}
\label{eq:att-transformer}
\begin{aligned}
  & \mathbf{A}  = \mbox{Softmax}\left(\frac{\mathbf{Q}\mathbf{K}^T}{\sqrt{d_h}}\right)\in {\mathbb{R}^{N \times N}},\\
  & \mbox{Attention}(\mathbf{Z})  = \mathbf{A}\mathbf{V}.
\end{aligned}
\end{align}
Such a formulation builds the long-range dependency among \textit{patches} of each image, which allows ViT to identify the image regions semantically relevant to classification. Furthermore, the formulation above induces the cross-attention mechanism when queries and keys are computed from different images~\citep{zhu2021DETR}.

\begin{figure*}[htb]
\centering{\scalebox{0.52}
{\includegraphics{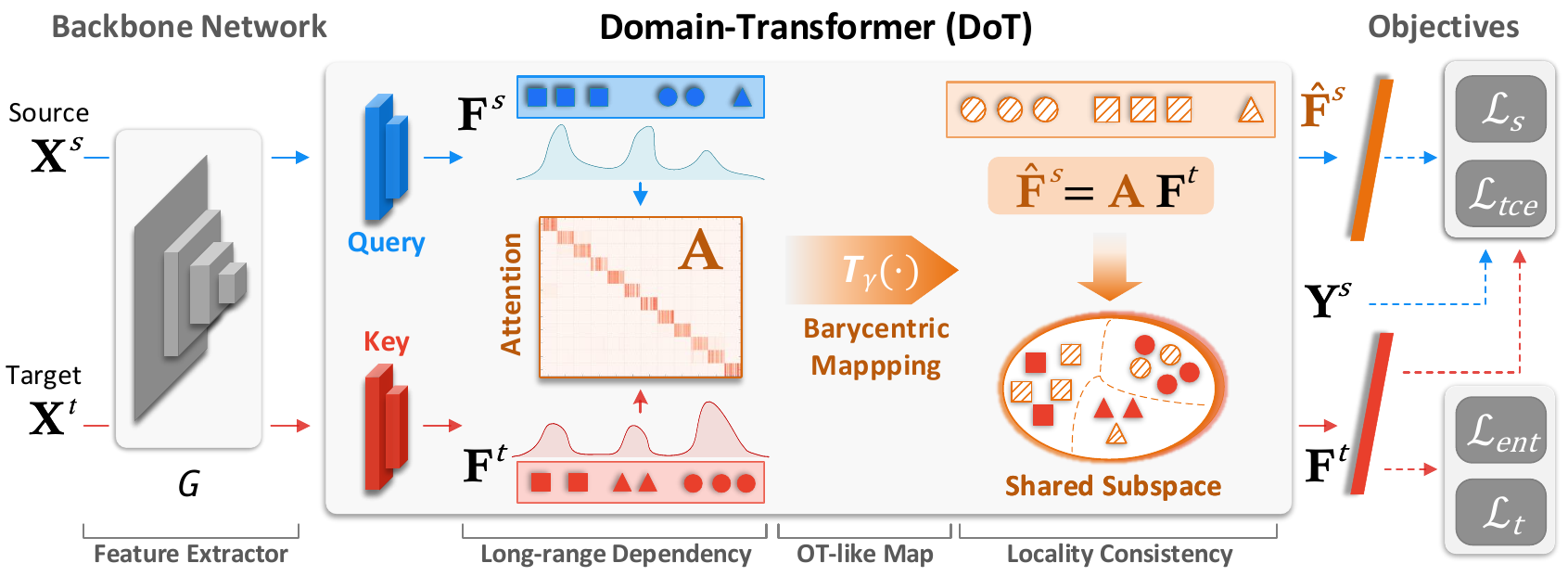}}
\caption{Illustration of our method, which consists of three parts, i.e., the shared backbone network, the domain-transformer (DoT), and the shared classifier. In particular, DoT can be further decomposed into the long-range dependency characterization, the OT-like map, and the locality consistency learning phases. It essentially explores the sample correspondence between domains by the novel attention mechanism at domain-level. With the projection from the source (e.g., $\mathbf{F}^s$) onto the subspace of the target domain, the new feature vectors (e.g., $\hat{\mathbf{F}}^s$) are weighted by the most relevant ones in the target domain, and thus, they are expected to be transferable and adaptive across domains. Note that $\mathbf{Y}^s$ represents the ground-truth label of source samples $\mathbf{X}^s$. Best viewed in color.}
\label{fig:DoT}}
\end{figure*}

Inspired by the successes of ViT, several methods have tried to address UDA with vision transformers. \citet{ma2021WinTR} propose the Win-Win Transformer (WinTR) to exploit domain-specific and invariant knowledge, which transfers cross-domain knowledge by target pseudo-label refinement and contrastive learning. To learn transferable and discriminative features, \citet{yang2023tvt} propose a Transferable Vision Transformer (TVT), which injects class-token and patch-token domain discriminators into the attention mechanism of ViT. \citet{xu2022cdtrans} propose a triple-branch Cross-Domain Transformer (CDTrans) for UDA, where the source-target branch adopts the cross-attention mechanism to align each cross-domain image pair via patch similarity. Based on an adversarial learning framework, \citet{SSRT2022} develop a Safe Self-Refinement for Transformer-based domain adaptation (SSRT), which utilizes perturbed target predictions to refine the model. \citet{SDAT2022} demonstrate that a smooth combination of task loss and adversarial loss leads to sub-optimal results, and then propose Smooth Domain Adversarial Training (SDAT) that only focuses on smoothing task loss. \citet{PMTrans2023} propose a PatchMix Transformer (PMTrans), which constructs an intermediate domain and interprets the process of domain alignment as a min-max cross entropy game.

However, these methods focus on the application of vision transformers while ignoring their interpretability. Besides, they directly employ the attention mechanism at patch-level, which cannot build interactions between cross-domain samples and fails to guarantee the shift minimization for cross-domain distributions. To overcome these challenges, inspired by ViT, we propose the domain-level attention mechanism. Methodologically, it captures the sample correspondence and promotes locality consistency across domains, which are indeed important for the deep backbones that only consider patch-level correspondence. Theoretically, we build a theoretical connection between attention and OT geometry, and derive informative generalization bounds based on OT geometry. This connection offers a new perspective for understanding Transformer, and presents a reasonable interpretation for DoT, i.e., the domain shift and generalization error are explicitly connected with DoT.

\section{Methodology}\label{sect:method}
In this section, we introduce the Domain-Transformer (DoT) for UDA. Section \ref{subsec:Domain-Transformer} details the core mechanism of DoT, i.e., the domain-level cross-attention. Section~\ref{subsec:connection_OT_DoT} builds the connection between DoT and OT theory. Section~\ref{subsec:theoretical_analysis} provides the generalization error bounds. Finally, the local structure regularization and the overall algorithm of DoT are presented in Section~\ref{subsec:local_structure_reg} and Section~\ref{subsection:algorithm-and-optimization}, respectively.

In UDA, we assume that we have access to a labeled source domain $\mathcal{D}^{s}=\{(\bm{x}_i^s, y_i^s)\}_{i = 1}^{n_s}$ and an unlabeled target domain $\mathcal{D}^t = \{\bm{x}_j^t\}_{j = 1}^{n_t}$, where $\bm{x}_i^s$, $\bm{x}_j^t$ represent samples and $y_i^s \in \{1,2,\cdots, K\}$ denotes the ground-truth label of $\bm{x}^s_i$.

\subsection{Domain-Transformer}
\label{subsec:Domain-Transformer}

We first propose the DoT method with the domain-level cross-attention mechanism, which is helpful to build sample correspondence across domains.

Different from the explicit position information of sentences in NLP, samples from the two domains are position-free. We can explore the implicit local structures across domains in the latent feature space. With the feature extractor $G(\cdot)$, the feature vectors of the source and target domains are represented by
\begin{equation}
\label{Eq:deep-features}
\begin{aligned}
\mathbf{G}^s = G(\mathbf{X}^s)~~~\mbox{and}~~~\mathbf{G}^t = G(\mathbf{X}^t),\\
\end{aligned}
\end{equation}
respectively, where $\mathbf{G}^s\!\in\! \mathbb{R}^{n_s \times d_1 }$ and $\mathbf{G}^t\! \in\! \mathbb{R}^{n_t \times d_1 }$. Note that the feature extractor can employ CNNs or vision transformers as a backbone network.

Under the assumption of UDA, samples from different domains may have potentially significant distances due to the domain shifts. In this paper, we refer to the similarities between samples within the same underlying class but different domains as lone-range correspondence. Besides, we formulate this long-range dependency by proposing the domain-level attention, in which the \textit{queries} come from the source domain and the \textit{key-value pairs} are derived from the target domain. To simulate the separate and learnable feature embeddings of queries and key-value pairs, we employ the fully-connected networks (FCNs) for projections. These two FCNs for the source domain and the target domain are denoted as $F_s(\cdot)$ and $F_t(\cdot)$, respectively, which can be embedded into the whole network architecture and then trained in an end-to-end manner. The \textit{queries} and \textit{keys} (\textit{values}) can be formulated by
\begin{equation}
\label{Eq:FCNs}
\mathbf{F}^s = F_s(\mathbf{G}^s),~~~~~\mathbf{F}^t = F_t(\mathbf{G}^t),
\end{equation}
where $\mathbf{F}^s = [\bm{f}_1^s,\bm{f}_2^s,\cdots,\bm{f}_{n_s}^s]^T \in \mathbb{R}^{n_s \times d_2 }$ and $\mathbf{F}^t \in \mathbb{R}^{n_t \times d_2 }$.

As shown in the middle of Fig.~\ref{fig:DoT}, DoT exchanges locality information of samples across domains. This procedure is achieved by the domain-level attention mechanism, which formulates the attention map $\mathbf{A}\in \mathbb{R}^{n_s \times n_t}$ (also known as attention score matrix) as the expression coefficients across domains, \ie,
\begin{equation}
\label{Eq:softmax-DoT}
\mathbf{A} = \mbox{softmax}\left(\frac{\mathbf{F}^s{\mathbf{F}^t}^T}{\sqrt{d_2}}\right),
\end{equation}
or
\begin{equation}
\label{Eq:entries-of-A}
a_{ij} = \frac{\exp({\bm{f}_i^s}^T \bm{f}_j^t/\sqrt{d_2})}{\sum_{i = 1}^{n_t}\exp({\bm{f}_i^s}^T \bm{f}_j^t /\sqrt{d_2})}.
\end{equation}
The softmax function is applied to the rows of the scaled similarity matrix $\frac{\mathbf{F}^s{\mathbf{F}^t}^T}{\sqrt{d_2}}$, which characterizes the pair-wise similarities between the source sample $\mathbf{X}^s$ and the target sample $\mathbf{X}^t$ in the feature subspace. Specifically, the $i$-th row of $\mathbf{A}$ represents the similarities between \textit{query} $\bm{f}_i^s$ and all \textit{keys} $\{\bm{f}_j^t\}_{j = 1}^{n_t}$. Thus, it can be viewed as a competitive expression of sample correspondence intensity. If the attention weight $a_{ij}$ is larger, the source feature $\bm{f}_i^s$ will allocate more attention to the target feature $\bm{f}_i^t$. The output representations of the domain-transformer module, i.e., the transformed source features (also called transformed \textit{queries}) $\hat{\mathbf{F}}^s$, can be represented by
\begin{equation}\label{eq:ot-map}
    \hat{\mathbf{F}}^s = \mathbf{A}\mathbf{F}^t\in \mathbb{R}^{n_s \times d_2}.
\end{equation}

For each \textit{query} $\bm{f}^s$, it can be observed that the transformed \textit{query} $\hat{\bm{f}}^s_i$ is a weighted sum of \textit{keys} $\{\bm{f}^t_j\}_{j=1}^{n_t}$. Among all the feature vectors $\{\bm{f}^t_j\}_{j=1}^{n_t}$ in the target domain, those more similar to \textit{query} $\bm{f}^s_i$ in the source domain will have a larger attention weight $a_{ij}$, and they contribute more to the transformed \textit{query} $\hat{\bm{f}}^s_i$. Though there is a discrepancy between source and target domains, feature vectors in the same class different domains are expected to be more similar. Thus, each transformed \textit{query} $\hat{\bm{f}}^s_i$ is mainly weighted by the most relevant \textit{values/keys} in $\{\bm{f}^t_j\}_{j=1}^{n_t}$. In this perspective, the attention map $\mathbf{A}$ characterizes long-range dependency and captures the correspondence between samples in the same class but different domains. Thus, the transformed \textit{queries} $\hat{\mathbf{F}}^s$ can be considered as the image of the source features $\mathbf{F}^s$ in the subspace of the target domain. Since $\hat{\mathbf{F}}^s$ retains labels of $\mathbf{F}^s$, the classifier $C(\cdot)$ trained on $\hat{\mathbf{F}}^s$ is expected to be generalized well to the target domain.

\subsection{Connection between DoT and OT}
\label{subsec:connection_OT_DoT}

Both the similarity estimation approach and the projection manner, as shown in Eq.~\eqref{Eq:entries-of-A} and Eq.~\eqref{eq:ot-map}, further encourage us to explore the connection between DoT and the OT algorithm. The relationship will provide new insight into understanding the DoT method from the view of Wasserstein geometry. Moreover, it implies that the domain discrepancy can be explicitly mitigated in the DoT model. This is just very important to deal with the UDA problem effectively.

In the context of domain adaptation, once the attention map $\mathbf{A}$ has been computed, the sample correspondence between the source and the target domains can be considered as determined.
It is interesting that the attention map $\mathbf{A}$ shown in Eq.~\eqref{Eq:softmax-DoT} has an intrinsic connection with the transport/assignment plan $\gamma$ in OT.
Here we denote $\Omega\! =\! \{\bm{\gamma}\in \mathbb{R}_+^{n_s \times n_t}~|~\bm{\gamma}\mathbf{1}_{n_t} \!=\! \mu_s, ~\bm{\gamma}^T\mathbf{1}_{n_s} \!= \!\mu_t\}$, where $\mu_{s/t}$ denotes the empirical distribution of the source/target domain. The entropy-regularized Kantorovich formulation of the OT problem \citep{cuturi2013sinkhorn} is defined as
\begin{align}
\label{Eq:regularized-ot}
\underset{\bm{\gamma}\in\Omega}{\arg\min}\left< \bm{\gamma}, \mathbf{M} \right>_F + \lambda H(\bm{\gamma}),
\end{align}
where $\left<\cdot, \cdot \right>_F$ is the Frobenius dot product, $\mathbf{M}$ is the cost matrix, and $H(\bm{\gamma}) = \sum_{i,j}\bm{\gamma}_{ij}\log \bm{\gamma}_{ij}$. Generally, the squared $\ell_2$ distance, \ie, $\mathbf{M}_{ij} = \|\bm{f}_i^s - \bm{f}_j^t \|^2_2$, is used to compute the cost matrix.
For $\lambda>0$, the OT solution $\bm{\gamma}^{\ast}$ of Eq.~\eqref{Eq:regularized-ot} is unique and has the form~\citep{cuturi2013sinkhorn}
\begin{align}
\label{Eq:solution-for-ot}
\bm{\gamma}^{\ast} = \mbox{diag}(\mu)\exp\left(-\frac{\mathbf{M}}{\lambda}\right)\mbox{diag}(\nu),
\end{align}
where $\mu\in\mathbb{R}^{n_s}$ and $\nu \in \mathbb{R}^{n_t}$ are two none-negative vectors. It can be recast into a bi-stochastic matrix scaling problem \citep{wang2022robust} and then solved by the Sinkhorn-Knopp Algorithm \citep{knight2008sinkhorn}.
We can rewrite Eq.~\eqref{Eq:solution-for-ot} as
\begin{align}
\bm{\gamma}^{\ast}_{ij}& = \mu_i\exp(-\mathbf{M}_{ij}/{\lambda})\nu_j \notag \\
& =\mu_i\exp(-\|\bm{f}_i^s - \bm{f}_j^t\|_2^2/{\lambda})\nu_j \notag \\
& \propto \mu_i\exp(2(\bm{f}_i^s)^T\bm{f}_j^t/{\lambda})\nu_j. \label{Eq:solution-ot-rewrite}
\end{align}

Compared Eq.~\eqref{Eq:solution-ot-rewrite} with Eq.~\eqref{Eq:entries-of-A}, it can be observed that both $a_{ij}$ of the attention map and $\gamma^{\ast}_{ij}$ of the OT plan are based on the matrix scaling problem of $\exp({\bm{f}^s_i}^T\bm{f}_j^t$).
More specifically, by setting $\lambda=2\sqrt{d_2}$, it is row-stochastic for the attention map $\mathbf{A}$, while doubly-stochastic for the OT plan due to its double constraints on marginal distributions. Thus, $\bm{\gamma}^{\ast}$ is empirically much sparser than $\mathbf{A}$. Intrinsically, columns of $\hat{\mathbf{F}}^s$ are sparsely linear combinations of those in $\mathbf{F}^t$. In this perspective, the attention map $\mathbf{A}$ has some equivalence to barycentric mapping induced by the optimal transport plan $\bm{\gamma}$. Therefore, some appealing properties of the optimal transport literature can be explored in our DoT method. Suppose that the marginal distribution (w.r.t. features) of the source domain and the target domain is $\mu_s$ and $\mu_t$, respectively. We can interpolate the two distributions by the geodesics of the Wasserstein metric~\citep{Villani-09}, parameterized by a scalar $t\in [0,1]$. This defines a new or intermediate distribution $\hat{\mu}$ such that
\begin{equation}\label{eq:intermediate-distr}
\hat{\mu} = \underset{\mu}\arg\min~t\mathcal{W}^2_2(\mu_s,\mu)+(1-t)\mathcal{W}^2_2(\mu_t,\mu).
\end{equation}
This distribution boils down to~\citep{Villani-09}
\begin{align}\label{eq:intermediate-distr-explicit}
\hat{\mu} = \sum_{i,j} \gamma_{ij}\delta_{t\bm{f}^s_i+(1-t)\bm{f}^t_j}.
\end{align}
As a result, we can map samples in the source domain onto the target domain by selecting $t=1$. Then, the intermediate distribution $\hat{\mu}$ is a distribution with the same support of $\mu_t$, and we have $\hat{\mu}=\sum_j\hat{p}^t_j\delta_{\bm{f}^t_j}$,
with $\hat{p}^t_j=\sum_i \gamma_{ij}$. Note the weights $\hat{p}^t_j$ is the sum of weights between all source samples $\bm{f}^s_i$ and the target sample $\bm{f}^t_j$. Recall that $a_{ij}$ tells us how much probability mass of $\bm{f}^s_i$ is transferred to $\bm{f}^t_j$, by comparing the attention matrix with the optimal transport plan. Along this line, we can exploit the domain-level attention as the following barycentric mapping based on the equivalence in Eq.~\eqref{Eq:solution-ot-rewrite}, i.e.,
\begin{equation}\label{eq:barycentric-map-ot}
\hat{\bm{f}}^s_i = \underset{\bm{z}\in \mathbb{R}^{d_2}}{\arg\min}\sum_{j} \gamma_{ij}m(\bm{z}, \bm{f}^t_j),
\end{equation}
where $m$ denotes the distance metric, $\bm{f}^t_j$ is a given feature vector in the target domain and $\hat{\bm{f}}^s_i$ is its transformed representation. In particular, when the distance function is the squared $\ell_2$ metric, this barycenter corresponds to a weighed average and the sample is mapped into the convex hull of the target samples~\citep{courty2016optimal}. For all samples in the source domain, the barycentric mapping has an analytic formulation as
\begin{equation}\label{eq:barycentric-map}
\hat{\mathbf{F}}^s = \bm{T}_{\gamma}(\mathbf{F}^s)=\mbox{diag}(\bm{\gamma}\mathbf{1}_{n_t})^{-1}\bm{\gamma}\mathbf{F}^t=\mathbf{A}\mathbf{F}^t,
\end{equation}
where $\mathbf{1}_{n_t}$ is the all-ones vector in $\mathbb{R}^{n_t}$, and $\bm{T}_{\gamma}$ denotes the optimal transport map associating with the transport plan $\bm{\gamma}$. This result has multiple important meanings in domain adaptation. It is a first-order approximation of the true $n_s$ Wasserstein barycenters of the target distribution, as stated by~\citep{courty2016optimal}. Moreover, the projection matrix $\mbox{diag}(\bm{\gamma}\mathbf{1}_{n_t})^{-1}\bm{\gamma}$ is the (row)-normalized similarity weights in the attention mechanism, as shown in Eq.~\eqref{eq:ot-map}. Therefore, it builds an intrinsic connection between DoT and the optimal transport theory, and thus presents a reasonable interpretation for the effectiveness of our DoT method.

\subsection{Theoretical Analysis}
\label{subsec:theoretical_analysis}

Based on the connection between DoT and OT algorithms, an informative generalization bound for UDA can be derived from the perspective of learning theory. The theoretical results provide insights into understanding the proposed domain-attention mechanism in UDA problem.

To theoretically analyze the generalization performance of DoT method, a generalization bound with the Wasserstein metric-based discrepancy is necessary. We consider the domain adaptation problem in the bi-class setting, with continuous outputs, as follows. To facilitate the characterization of distribution gap and labeling discrepancy, we define a domain as a pair consisting of a distribution $P_X$ on input $\MC{X}$ and a labeling function $f:\MC{X}\rightarrow [0,1]$, which can have a fractional (expected) value when labeling occurs non-determinedly. More specifically, we denote by $(P_X^s, f_s)$ the source domain and $(P_X^t, f_t)$ the target domain. A hypothesis space $\MC{H}$ is a set of hypotheses $h:\MC{X}\rightarrow [0,1]$.

\begin{definition}[True Risk]
The true risk is defined as the probability according to the distribution $P_X^s$ that a hypothesis $h$ disagrees with a labeling function $f_s$ (which can also be a hypothesis), \ie,
\begin{equation}
\epsilon_s(h,f_s):=\MBB{E}_{x\sim P_X^s}[\left|h(x)-f_s(x)\right|].
\end{equation}
\end{definition}

Note that the true risk satisfies the triangle inequality, i.e., $\epsilon_s(h_1,h_3)\leq \epsilon_s(h_1,h_2)+ \epsilon_s(h_2,h_3)$, for any hypotheses $h_1$, $h_2$ and $h_3\in \MC{H}$. When we want to refer to the source error of a hypothesis, we use the shorthand $\epsilon_s(h)=\epsilon_s(h,f_s)$. Without loss of generality, we use the parallel notations $\epsilon_t(h,f_t)$ and $\epsilon_t(h)$ for the target domain.

Then we introduce some preliminary definitions on the probability metric, which will be used to reformulate the risk function as the (domain) distribution discrepancy.

\begin{definition}[Seminorm]
Let $\MC{F}$ be a function space. A real valued function $\rho:\MC{F}\rightarrow\MBB{R}$ is called a seminorm if it satisfies the following two conditions.
\begin{itemize}
  \item \textit{Subadditivity inequality:} $\rho(f_1+f_2)\leq \rho(f_1)+\rho(f_2)$ for any $f_1,f_2\in\MC{F}$;
  \item \textit{Absolute homogeneity:} $\rho(af)=|a|\rho(f)$ for any $f\in\MC{F}$ and all scalars $a$.
\end{itemize}
\end{definition}

Note that these two conditions imply that $\rho(0)=0$ and that every seminorm $\rho$ also has the \textit{Nonnegativity} property: $\rho(f)\geq 0$ for any $f\in \MC{F}$.

\begin{definition}[Sobolev Seminorm]
Let $(\MC{X},d)$ be a metric space. For any real valued function $f$ on $\MC{X}$, its Sobolev seminorm is defined as
\begin{equation}
\|f\|_{H^1}:=(\int\|\nabla f(x)\|^2dx)^{\frac{1}{2}}.
\end{equation}
\end{definition}

\begin{definition}[Integral Probability Metric (IPM)]
Let $\MC{F}$ be a class of real-valued bounded measurable functions on $\MC{X}$. The IPM between two distributions $P_X^s$ and $P_X^t$ is defined as
\begin{equation}\label{eq:IPM-def}
d(P_X^s, P_X^t) := \mathop{\sup}_{f\in\MC{F}} \Big| \MBB{E}_{x\sim P_X^s}[f(x)] - \MBB{E}_{x\sim P_X^t}[f(x)] \Big|.
\end{equation}
\end{definition}

Choosing $\MC{F}=\{f: \| f \|_{H^1} \leq 1\}$ in Eq.~\eqref{eq:IPM-def} yields
\begin{eqnarray}
\label{eq:IPM-sobolev}
\|P_X^s- P_X^t\|_{H^{-1}} :=& \mathop{\sup}_{f:\|f\|_{H^1}\leq 1} \Big| \MBB{E}_{x\sim P_X^s}[f(x)]   \notag\\
&-\MBB{E}_{x\sim P_X^t}[f(x)] \Big|.
\end{eqnarray}
Furthermore, there exists two positive constants $c_1$ and $c_2$ such that~\citep{sriperumbudur2009integral,santambrogio2015optimal}
\begin{equation}
 c_1\|P_X^s\!-\! P_X^t\|_{H^{-1}} \!\leq \!\MC{W}_2(P_X^s, P_X^t)\! \leq\! c_2\|P_X^s\!-\! P_X^t\|_{H^{-1}}.
\end{equation}
It indicates that $H^{-1}$ norm, which is the dual norm of the Sobolev seminorm, is equivalent to the 2-Wasserstein $\MC{W}_2$ distance. In other words, the $\MC{W}_2$ distance gives an upper bound on an integral probability metric in $\MC{F}$.

\begin{remark}
This equivalence implies that the disagreement of any hypothesis on two feature distributions is upper-bounded by the 2-Wasserstein distance between them. Thus, the inequality can be employed to reformulate the hypotheses' distance in risk objective as $\MC{W}_2$ between feature distributions, which is actually the discrepancy between transformed features in Eq.~\eqref{eq:ot-map}.
\end{remark}

Next we show the generalization error bounded by $\MC{W}_2$ distance across domains. Generally, we assume the hypothesis space $\MC{H}= \MC{F}$, which is reasonable for regression task or binary classification with probability output. First, we bound the distance between the true risks across domains as the following lemma.

\begin{lemma}\label{lem:W1-bound}
For any hypotheses $h_1,h_2\in\MC{H}$, there exists a positive constant $c$ such that
\begin{equation}
\left| \epsilon_s(h_1,h_2) - \epsilon_t(h_1,h_2) \right| \leq c\MC{W}_2(P_X^s, P_X^t).
\end{equation}
\end{lemma}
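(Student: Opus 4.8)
The plan is to express each true risk $\epsilon_s(h_1,h_2)$ and $\epsilon_t(h_1,h_2)$ as the expectation of a single fixed function against the respective marginal distribution, and then invoke the equivalence between the $H^{-1}$ norm and the $\MC{W}_2$ distance stated just above. Concretely, define $g(x):=|h_1(x)-h_2(x)|$. Then by the definition of true risk,
\begin{equation*}
\epsilon_s(h_1,h_2)-\epsilon_t(h_1,h_2)=\MBB{E}_{x\sim P_X^s}[g(x)]-\MBB{E}_{x\sim P_X^t}[g(x)].
\end{equation*}
Taking absolute values, the right-hand side is at most $\sup_{f\in\MC{F}}\bigl|\MBB{E}_{x\sim P_X^s}[f(x)]-\MBB{E}_{x\sim P_X^t}[f(x)]\bigr|$ once we know that $g$ lies in (a scalar multiple of) the unit ball of the Sobolev seminorm, i.e. that $\|g\|_{H^1}$ is finite and bounded by a constant depending only on $\MC{H}$.

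The next step is therefore to control $\|g\|_{H^1}=(\int\|\nabla|h_1-h_2|\|^2dx)^{1/2}$. Since $h_1,h_2\in\MC{H}=\MC{F}=\{f:\|f\|_{H^1}\le 1\}$, the subadditivity and absolute homogeneity of the seminorm give $\|h_1-h_2\|_{H^1}\le\|h_1\|_{H^1}+\|h_2\|_{H^1}\le 2$. The remaining point is that taking the absolute value does not increase the Sobolev seminorm: almost everywhere $\nabla|h_1-h_2|=\pm\nabla(h_1-h_2)$ (this is the standard fact that $|u|\in H^1$ whenever $u\in H^1$, with $\||u|\|_{H^1}=\|u\|_{H^1}$), so $\|g\|_{H^1}\le 2$. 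Hence $g/2\in\MC{F}$, and by the IPM bound followed by the norm equivalence $\|P_X^s-P_X^t\|_{H^{-1}}\le c_2^{-1}\MC{W}_2(P_X^s,P_X^t)$ — wait, the correct direction is $\|P_X^s-P_X^t\|_{H^{-1}}\le c_1^{-1}\MC{W}_2(P_X^s,P_X^t)$ from $c_1\|P_X^s-P_X^t\|_{H^{-1}}\le\MC{W}_2$ — we obtain
\begin{equation*}
\bigl|\epsilon_s(h_1,h_2)-\epsilon_t(h_1,h_2)\bigr|\le 2\,\|P_X^s-P_X^t\|_{H^{-1}}\le \frac{2}{c_1}\,\MC{W}_2(P_X^s,P_X^t),
\end{equation*}
so the claim holds with $c=2/c_1$.

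I expect the only genuinely delicate point to be the chain-rule / composition step asserting $\||h_1-h_2|\|_{H^1}=\|h_1-h_2\|_{H^1}$; it is elementary but relies on $h_1,h_2$ being Sobolev functions (not merely bounded measurable), which is exactly why the excerpt restricts attention to $\MC{H}=\MC{F}$ and to regression / probability-output binary classification where this regularity is reasonable. Everything else is bookkeeping: linearity of expectation to collapse the difference of risks onto one integrand, the seminorm inequalities to bound $\|h_1-h_2\|_{H^1}$, and the already-cited Sobolev/Wasserstein equivalence. If one is uneasy about the absolute value, an alternative is to avoid it entirely by noting $|h_1(x)-h_2(x)|$ can be replaced inside the IPM by any function with the same expectation gap of bounded $H^1$ seminorm, but the direct argument above is cleanest.
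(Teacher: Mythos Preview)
Your proposal is correct and follows the same overall strategy as the paper: bound the risk gap by an IPM over the Sobolev unit ball and then invoke the $H^{-1}$/$\MC{W}_2$ equivalence. The one substantive difference is that you are more careful than the paper about the absolute value in the risk definition. The paper sets $\bar h=(h_1-h_2)/2$, checks $\|\bar h\|_{H^1}\le 1$, and then writes
\[
\bigl|\epsilon_s(h_1,h_2)-\epsilon_t(h_1,h_2)\bigr| \;=\; 2\bigl|\MBB{E}_{P_X^s}[\bar h]-\MBB{E}_{P_X^t}[\bar h]\bigr|,
\]
which silently replaces $\MBB{E}[|h_1-h_2|]$ by $\MBB{E}[h_1-h_2]$; that equality is not valid in general. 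Your route---working directly with $g=|h_1-h_2|$ and using the Sobolev chain rule $\|\,|u|\,\|_{H^1}=\|u\|_{H^1}$ to get $\|g\|_{H^1}\le 2$---is the clean fix, and yields the same constant $c=2/c_1$. So: same approach, but your handling of the absolute value actually patches a gap in the paper's argument.
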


\begin{proof}
For any $h_1,h_2\in\MC{H}$, denote $\bar{h} = \frac{h_1-h_2}{2}$. Note that $\bar{h}$ still belongs to $\MC{H}$ since
\begin{equation}
  \| \bar{h} \|_{H^1} = \left\|\frac{h_1-h_2}{2}\right\|_{H^1}\leq\frac{\| h_1\|_{H^1}+\| h_2\|_{H^1}}{2}\leq 1.
\end{equation}
Now we prove the main inequality.
\begin{eqnarray}
&~& \left| \epsilon_s(h_1,h_2) - \epsilon_t(h_1,h_2) \right| \notag\\
&\leq& 2 \Big|\MBB{E}_{x\sim P_X^s}\!\left[\frac{h_1(x)\!-\!h_2(x)}{2}\right] \notag\\
&~&-\MBB{E}_{x\sim  P_X^t}\!\left[\frac{h_1(x)\!-\!h_2(x)}{2}\right] \Big| \notag\\
&=& 2\Big| \MBB{E}_{x\sim P_X^s}\left[\bar{h}(x)\right] - \MBB{E}_{x\sim  P_X^t}\left[\bar{h}(x)\right] \Big| \notag\\
&\leq& 2\mathop{\sup}_{h\in\MC{H}} \Big| \MBB{E}_{x\sim P_X^s}[h(x)] - \MBB{E}_{x\sim P_X^t}[h(x)] \Big| \notag\\
&=& 2\|P_X^s- P_X^t\|_{H^{-1}} \notag\\
&\leq& c\MC{W}_2(P_X^s, P_X^t).
\end{eqnarray}
\end{proof}


Based on this lemma, we now connect the generalization error with the $\MC{W}_2$ distance (similar to~\cite[Theorem 2]{ben2010theory}). Note that the above lemma cannot be directly applied to labeling function $f_s$ and $f_t$, since they may not be a hypothesis. Usually, to ensure the tightness of the upper bound, we define an intermediate hypothesis $h^*$ which minimizes the joint error as
\begin{equation}
h^* = \mathop{\arg \inf}_{h\in\MC{H}} \epsilon_s(h)+ \epsilon_t(h).
\end{equation}
We denote the optimal joint error of the intermediate hypothesis $h^*$ as $\lambda = \epsilon_s(h^*)+ \epsilon_t(h^*)$.

\begin{theorem}
\label{thm:GUB-optimal-hypothesis}
Assume that $\MC{H}$ is a hypothesis space satisfying $\| h\|_{H^1}\leq 1$ for any $h\in\MC{H}$. Let $P_X^s$ and $P_X^t$ be the marginals over $\MC{X}$ of the source and target domains, respectively. Then for any $h\in\MC{H}$, there exists a positive constant $c$ such that
\begin{equation}
\epsilon_t(h)\leq \epsilon_s(h) + c\MC{W}_2(P_X^s, P_X^t) + \lambda.
\end{equation}
\end{theorem}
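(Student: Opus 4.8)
The plan is to mimic the classical domain-adaptation bound of Ben-David \etal~\cite{ben2010theory}: combine the triangle inequality for the true risk (noted right after the definition of \emph{True Risk}) with Lemma~\ref{lem:W1-bound}. The one genuine subtlety is that Lemma~\ref{lem:W1-bound} compares risks for a \emph{pair of hypotheses in} $\MC{H}$, whereas the labeling functions $f_s$ and $f_t$ need not lie in $\MC{H}$; the intermediate hypothesis $h^*=\arg\inf_{h\in\MC{H}}\epsilon_s(h)+\epsilon_t(h)$, with $\lambda=\epsilon_s(h^*)+\epsilon_t(h^*)$, is introduced precisely to route the argument through members of $\MC{H}$.

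First I would peel off the target error through $h^*$ using the triangle inequality:
\[
\epsilon_t(h)=\epsilon_t(h,f_t)\leq \epsilon_t(h,h^*)+\epsilon_t(h^*,f_t)=\epsilon_t(h,h^*)+\epsilon_t(h^*).
\]
Since both $h$ and $h^*$ belong to $\MC{H}$, Lemma~\ref{lem:W1-bound} is applicable to the pair $(h,h^*)$, which gives
\[
\epsilon_t(h,h^*)\leq \epsilon_s(h,h^*)+\left|\epsilon_s(h,h^*)-\epsilon_t(h,h^*)\right|\leq \epsilon_s(h,h^*)+c\MC{W}_2(P_X^s,P_X^t).
\]

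Next I would bound the source term $\epsilon_s(h,h^*)$ by a second use of the triangle inequality, now on the source domain and routed through $f_s$:
\[
\epsilon_s(h,h^*)\leq \epsilon_s(h,f_s)+\epsilon_s(f_s,h^*)=\epsilon_s(h)+\epsilon_s(h^*).
\]
Chaining the three displays and recalling $\lambda=\epsilon_s(h^*)+\epsilon_t(h^*)$ yields
\[
\epsilon_t(h)\leq \epsilon_s(h)+\epsilon_s(h^*)+c\MC{W}_2(P_X^s,P_X^t)+\epsilon_t(h^*)=\epsilon_s(h)+c\MC{W}_2(P_X^s,P_X^t)+\lambda,
\]
which is the asserted inequality.

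I expect the only point requiring care to be the legitimacy of invoking Lemma~\ref{lem:W1-bound}: it needs $\|\cdot\|_{H^1}\leq 1$ on the hypotheses it compares, and here this holds because $h,h^*\in\MC{H}$ by the theorem's hypothesis (the factor-of-two normalization having already been absorbed inside the lemma's proof). Everything else is just the metric structure of $\epsilon_s$ and $\epsilon_t$, so no additional estimates are needed; in particular no assumption on $f_s,f_t$ beyond their implicit appearance in $\lambda$ is used.
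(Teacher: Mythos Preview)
Your proof is correct and follows exactly the same route as the paper: apply the triangle inequality for $\epsilon_t$ through $h^*$, invoke Lemma~\ref{lem:W1-bound} on the pair $(h,h^*)\in\MC{H}\times\MC{H}$ to pass from $\epsilon_t(h,h^*)$ to $\epsilon_s(h,h^*)$, then apply the triangle inequality for $\epsilon_s$ through $f_s$ and collect $\epsilon_s(h^*)+\epsilon_t(h^*)=\lambda$. Your remark that Lemma~\ref{lem:W1-bound} is only applied to hypotheses in $\MC{H}$ (and not to $f_s,f_t$) is precisely the point the paper makes just before stating the theorem.
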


\begin{proof}
\begin{eqnarray}
\epsilon_t(h)&=&\epsilon_t(h,f_t) \notag \\
&\leq & \epsilon_t(h,h^*) + \epsilon_t(h^*,f_t) \notag\\
&\leq & \epsilon_s(h,h^*) + c\MC{W}_2(P_X^s, P_X^t) + \epsilon_t(h^*,f_t) \notag\\
&\leq & \epsilon_s(h,f_s) + \epsilon_s(h^*,f_s) + c\MC{W}_2(P_X^s, P_X^t) \notag\\
&~&+\epsilon_t(h^*,f_t) \notag\\
& \leq & \epsilon_s(h) + c\MC{W}_2(P_X^s, P_X^t) + \lambda.
\end{eqnarray}
\end{proof}

Theorem~\ref{thm:GUB-optimal-hypothesis} shows that the generalization is bounded by the $\MC{W}_2$ distance across domains. Note that $\MC{W}_2$ distance is generally smaller than the $\MC{H}\Delta\MC{H}$ divergence in~\cite[Theorem 2]{ben2010theory}, where $\MC{H}$ can be any function space. Besides, $\MC{W}_2$ distance can be explicitly estimated in the OT framework. Further, if the labeling functions $f_s$ and $f_t$ belong to the hypothesis space $\MC{H}$, we can deduce a more informative upper bound without the intractable constant $\lambda$.


\begin{theorem}
\label{thm:GUB-labeling-dist}
Assuming $\MC{H}$ is a hypothesis space satisfies that $\| h\|_{H^1}\leq 1$ for any $h\in\MC{H}$, and $f_s, f_t\in \MC{H}$. Let $P_X^s$ and $P_X^t$ be the marginals over $\MC{X}$ of the source and target domains, respectively. Then for any $h\in\MC{H}$, there exists a positive constant $c$ such that
\begin{equation}
\begin{aligned}
\epsilon_t(h)~\leq~& \epsilon_s(h) + c\MC{W}_2(P_X^s, P_X^t) \\
 &+ \min\left\{ \epsilon_s(f_s,f_t),\epsilon_t(f_s,f_t) \right\}.
\end{aligned}
\end{equation}
\end{theorem}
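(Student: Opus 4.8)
The plan is to derive two separate one-sided bounds,
$\epsilon_t(h)\le \epsilon_s(h)+c\mathcal{W}_2(P_X^s,P_X^t)+\epsilon_t(f_s,f_t)$ and
$\epsilon_t(h)\le \epsilon_s(h)+c\mathcal{W}_2(P_X^s,P_X^t)+\epsilon_s(f_s,f_t)$, and then take the minimum of the two, which is exactly the asserted inequality. The only ingredients needed are (i) the triangle inequality for the true risk, $\epsilon_s(h_1,h_3)\le\epsilon_s(h_1,h_2)+\epsilon_s(h_2,h_3)$ (and likewise for $\epsilon_t$), recorded just after the definition of true risk, and (ii) Lemma~\ref{lem:W1-bound}, which bounds $|\epsilon_s(h_1,h_2)-\epsilon_t(h_1,h_2)|$ by $c\mathcal{W}_2(P_X^s,P_X^t)$ for \emph{any} $h_1,h_2\in\mathcal{H}$. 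The point that makes the present theorem stronger than Theorem~\ref{thm:GUB-optimal-hypothesis} is the extra assumption $f_s,f_t\in\mathcal{H}$, which now lets us feed the labeling functions themselves into Lemma~\ref{lem:W1-bound}.

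For the first bound I would start from $\epsilon_t(h)=\epsilon_t(h,f_t)$, insert $f_s$ by the triangle inequality to get $\epsilon_t(h,f_t)\le\epsilon_t(h,f_s)+\epsilon_t(f_s,f_t)$, and then apply Lemma~\ref{lem:W1-bound} to the pair $(h,f_s)\in\mathcal{H}\times\mathcal{H}$ to replace $\epsilon_t(h,f_s)$ by $\epsilon_s(h,f_s)+c\mathcal{W}_2(P_X^s,P_X^t)=\epsilon_s(h)+c\mathcal{W}_2(P_X^s,P_X^t)$; this yields $\epsilon_t(h)\le\epsilon_s(h)+c\mathcal{W}_2(P_X^s,P_X^t)+\epsilon_t(f_s,f_t)$. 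For the second bound I would swap the order of the two moves: first apply Lemma~\ref{lem:W1-bound} to $(h,f_t)$ to pass from the target to the source risk, $\epsilon_t(h,f_t)\le\epsilon_s(h,f_t)+c\mathcal{W}_2(P_X^s,P_X^t)$, and then insert $f_s$ on the source side, $\epsilon_s(h,f_t)\le\epsilon_s(h,f_s)+\epsilon_s(f_s,f_t)=\epsilon_s(h)+\epsilon_s(f_s,f_t)$. Since both chains have the same left-hand side $\epsilon_t(h)$, combining them gives the bound with $\min\{\epsilon_s(f_s,f_t),\epsilon_t(f_s,f_t)\}$.

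The argument is pure bookkeeping of triangle inequalities, so there is no genuinely hard step; the main thing to watch — hardly an obstacle — is that every invocation of Lemma~\ref{lem:W1-bound} has both arguments inside $\mathcal{H}$, which is precisely what $f_s,f_t\in\mathcal{H}$ guarantees, and that it is the choice of \emph{where} $f_s$ is introduced (target side versus source side) that produces the two distinct error terms appearing in the $\min$. This also makes transparent why the intractable ideal joint error $\lambda$ of Theorem~\ref{thm:GUB-optimal-hypothesis} drops out: it was only a surrogate for a bound on $|\epsilon_s(f_s,f_t)-\epsilon_t(f_s,f_t)|$, which is now supplied directly by the lemma.
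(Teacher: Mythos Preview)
Your proposal is correct and follows essentially the same approach as the paper: derive two one-sided bounds by either (a) applying the triangle inequality first and then Lemma~\ref{lem:W1-bound}, or (b) applying Lemma~\ref{lem:W1-bound} first and then the triangle inequality, and take the minimum. The only cosmetic difference is that you present the two chains in the opposite order from the paper; the logical content is identical.
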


\begin{proof}
If we apply Lemma \ref{lem:W1-bound} first, then
\begin{eqnarray}
\epsilon_t(h)&=& \epsilon_t(h,f_t) \nonumber\\
& \leq & \epsilon_s(h,f_t) + c\MC{W}_2(P_X^s, P_X^t) \nonumber\\
& \leq & \epsilon_s(h,f_s) + c\MC{W}_2(P_X^s, P_X^t) +\epsilon_s(f_s,f_t).
\end{eqnarray}
If we apply triangle inequality to the true risk first, then
\begin{eqnarray}
\epsilon_t(h) & = &  \epsilon_t(h,f_t) \nonumber\\
& \leq &  \epsilon_t(h,f_s) + \epsilon_t(f_s,f_t) \nonumber\\
& \leq &  \epsilon_s(h,f_s) + c\MC{W}_2(P_X^s, P_X^t) + \epsilon_t(f_s,f_t).
\end{eqnarray}
By combing the inequality above, we have
\begin{equation}
\begin{aligned}
\epsilon_t(h)~\leq~ & \epsilon_s(h) + c\MC{W}_2(P_X^s, P_X^t)  \\
& + \min \left\{ \epsilon_s(f_s,f_t), \epsilon_t(f_s,f_t) \right\}.
\end{aligned}
\end{equation}
\end{proof}

\begin{remark}
Under the coviariate shift assumption, the labeling functions of different domains are the same, i.e., $f=f_s=f_t$. Then the bound of Theorem \ref{thm:GUB-labeling-dist} boils down to
\begin{equation}
\left|\epsilon_s(h) - \epsilon_t(h)\right| \leq  c\MC{W}_2(P_X^s, P_X^t).
\end{equation}
It implies that minimizing the Wasserstein distance across domains is sufficient for successful domain adaptation.
\end{remark}

Note that the domain-level transformer in DoT is mathematically analogous to the barycentric mapping in OT. Then, the generalization upper-bounds in Theorem~\ref{thm:GUB-optimal-hypothesis} and Theorem~\ref{thm:GUB-labeling-dist} implies that the true risk on target domain can be bounded by the DoT model. These results ensure that the domain-level transformer module can explicitly reduce the domain discrepancy and enhance the model's transferability with the OT-like mapping Eq.~\eqref{eq:barycentric-map}.

Further, we consider the upper bound with empirical estimator. Denote $\mathbf{X}^s$ and $\mathbf{X}^t$ be the samples drawn according $P_X^s$ and $P_X^t$ with sample size $n$, respectively. Then the empirical distributions of $\mathbf{X}^s$ and $\mathbf{X}^t$ is denoted as $\hat{P}_X^s$ and $\hat{P}_X^t$, respectively.

\begin{theorem}[Rate of Wasserstein Distance]\citep{peyre2019computational}\
\label{thm:Wdistance-rate}
Let $\MC{X}=\MBB{R}^d$ and measures ($P_X^s, P_X^t$) are supported on bounded domain. For any $d>2$ and $1\leq p < +\infty$,
\begin{align}
& \MBB{E}_{(\hat{P}_X^s,\hat{P}_X^t)\sim (P_X^s,P_X^t)^n} \left[ |\MC{W}_p(\hat{P}_X^s,\hat{P}_X^t)\! -\! \MC{W}_p(P_X^s,P_X^t)| \right]  \nonumber \\
& \lesssim \MC{O}(n^{-\frac{1}{d}}).
\end{align}
\end{theorem}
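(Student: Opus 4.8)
The plan is to reduce the two-sample statement to the classical \emph{one-sample} rate for the convergence of an empirical measure to its population law in $\MC{W}_p$, and then recall how that one-sample rate is obtained by a dyadic multiscale estimate.

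First I would use the reverse triangle inequality for the metric $\MC{W}_p$ (which is a genuine metric on probability measures with finite $p$-th moment, hence applies to the empirical measures here, whose support is bounded). Applying the triangle inequality twice gives, for any four measures,
\begin{equation*}
\big| \MC{W}_p(\hat{P}_X^s,\hat{P}_X^t) - \MC{W}_p(P_X^s,P_X^t) \big| \leq \MC{W}_p(\hat{P}_X^s,P_X^s) + \MC{W}_p(\hat{P}_X^t,P_X^t).
\end{equation*}
Taking expectations over the two independent samples and using linearity reduces the claim to showing $\MBB{E}[\MC{W}_p(\hat{P}_X,P_X)] \lesssim n^{-1/d}$ for a single measure $P_X$ supported on a bounded set $\Omega\subset\MBB{R}^d$; applying this bound to $P_X^s$ and to $P_X^t$ and adding yields the stated $\MC{O}(n^{-1/d})$.

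The main work, and the main obstacle, is the one-sample estimate. Here I would rescale so that $\Omega\subseteq[0,1]^d$ and run the standard dyadic-partition (moment / chaining) argument: for each level $k\geq 0$ partition the cube into its $2^{dk}$ subcubes of side $2^{-k}$, and bound the $\MC{W}_p^p$-cost of matching $\hat{P}_X$ to $P_X$ by moving mass only within a subcube at a finest level $k_0$ (cost $\lesssim 2^{-k_0 p}$) plus, over scales $k\leq k_0$, the cost of correcting the cube-wise mass imbalances. The expected total imbalance at level $k$ is controlled by the variance of binomial counts, $\MBB{E}\sum_{Q}\!\big|\hat{P}_X(Q)-P_X(Q)\big| \lesssim \sqrt{2^{dk}/n}$, and each unit of corrected mass travels a distance $\lesssim 2^{-k}$, so
\begin{equation*}
\MBB{E}\big[\MC{W}_p^p(\hat{P}_X,P_X)\big] \;\lesssim\; n^{-1/2}\sum_{k\leq k_0} 2^{k(d/2-p)} \;+\; 2^{-k_0 p}.
\end{equation*}
For the regime $d>2p$ the geometric sum is dominated by its last term $2^{k_0(d/2-p)}$, and choosing $k_0$ so that $n^{-1/2}2^{k_0 d/2}\asymp 1$, i.e.\ $2^{k_0}\asymp n^{1/d}$, balances the two contributions and gives $\MBB{E}[\MC{W}_p^p(\hat{P}_X,P_X)] \lesssim n^{-p/d}$. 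Jensen's inequality, $\MBB{E}[\MC{W}_p(\hat{P}_X,P_X)] \leq \big(\MBB{E}[\MC{W}_p^p(\hat{P}_X,P_X)]\big)^{1/p}$, then upgrades this to $\MBB{E}[\MC{W}_p(\hat{P}_X,P_X)]\lesssim n^{-1/d}$, which combined with the first paragraph completes the argument. (The condition ``$d>2$'' in the statement should be read as the regime $d>2p$ in which this rate is the sharp one; for $d\le 2p$ the empirical rate is instead $n^{-1/2}$ up to logarithmic corrections, and the bounded-support hypothesis is precisely what forces the tail terms to vanish.) The cleanest reference for the one-sample estimate, together with its sharpness, is the collection of empirical-measure rates surveyed in~\cite{peyre2019computational}.
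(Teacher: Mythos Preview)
The paper does not prove this theorem at all: it is simply quoted from~\cite{peyre2019computational} and immediately used to deduce Corollary~\ref{cor:GUB_empirical_case}. So there is no ``paper's own proof'' to compare against; your proposal already supplies more than the paper does.

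That said, your outline is the standard and correct route. The reduction via the triangle inequality for $\MC{W}_p$ to the one-sample problem $\MBB{E}[\MC{W}_p(\hat{P}_X,P_X)]$ is exactly how the two-sample statement is obtained, and the dyadic multiscale argument you sketch (partition the unit cube at scales $2^{-k}$, control the mass imbalance at each level by binomial variance, balance the truncation level $k_0$) is the proof strategy behind the sharp rates of Dereich--Scheutzow--Schottstedt and Fournier--Guillin that~\cite{peyre2019computational} is summarising. Your observation that the natural regime is $d>2p$ rather than the paper's ``$d>2$'' is also well taken: the $n^{-1/d}$ rate is the correct one precisely when $d>2p$, while for $d\le 2p$ the rate improves to $n^{-1/2}$ (with a log factor at the boundary). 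The paper's hypothesis $d>2$ only matches the claimed rate when $p=1$; for general $p$ your reading is the accurate one.
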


The notation $f_1(n)\lesssim f_2(n)$ indicates that there exists a constant $\tilde{c}$, depending on $f_1$ and $f_2$ but not $n$, such that $f_1(n)\leq \tilde{c}f_2(n)$. Theorem \ref{thm:Wdistance-rate} implies that the empirical Wasserstein distance $\MC{W}_p(\hat{P}_X^s,\hat{P}_X^t)$ is asymptotically consistent. Based on this asymptotic property of Wasserstein distance, we could prove the generalization upper bound with empirical Wasserstein distance.

\begin{corollary}[Empirical Generalization Upper Bound]
\label{cor:GUB_empirical_case}
Assume that $\MC{H}$ is a hypothesis space satisfying $\| h\|_{H^1}\leq 1$ for any $h\in\MC{H}$. Let $P_X^s$ and $P_X^t$ be the marginals over $\MC{X}=\MBB{R}^d$ of the source and target domains, respectively. Denote $\hat{P}_X^s$ and $\hat{P}_X^t$ as the empirical distributions of samples drawn from $P_X^s$ and $P_X^t$ with sample size $n$, respectively. Then for any $h\in\MC{H}$, there exists a positive constant $c$ such that the following holds with high probability:
\begin{equation}
\epsilon_t(h)\leq \epsilon_s(h) + 2c\MC{W}_2(\hat{P}_X^s,\hat{P}_X^t) + \lambda + \MC{O}(n^{-\frac{1}{d}}).
\end{equation}
Furthermore, if $f_s, f_t\in \MC{H}$, then the following holds with high probability:
\begin{equation}
\begin{aligned}
\epsilon_t(h) ~\leq ~ & \epsilon_s(h) + 2c\MC{W}_2(\hat{P}_X^s,\hat{P}_X^t) + \MC{O}(n^{-\frac{1}{d}}) \\
     & + \min \left\{ \epsilon_s(f_s,f_t), \epsilon_t(f_s,f_t) \right\} .
\end{aligned}
\end{equation}

\end{corollary}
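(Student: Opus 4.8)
The plan is to deduce Corollary~\ref{cor:GUB_empirical_case} directly from Theorem~\ref{thm:GUB-optimal-hypothesis} (resp.\ Theorem~\ref{thm:GUB-labeling-dist}) by replacing the population Wasserstein distance $\MC{W}_2(P_X^s,P_X^t)$ with its empirical surrogate $\MC{W}_2(\hat{P}_X^s,\hat{P}_X^t)$, and absorbing the replacement error into the $\MC{O}(n^{-1/d})$ term using Theorem~\ref{thm:Wdistance-rate}. First I would start from the conclusion of Theorem~\ref{thm:GUB-optimal-hypothesis}, namely $\epsilon_t(h)\leq \epsilon_s(h) + c\MC{W}_2(P_X^s,P_X^t) + \lambda$, which holds deterministically for every $h\in\MC{H}$. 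The only quantity that needs control is $\MC{W}_2(P_X^s,P_X^t)$, which I would bound by the triangle inequality for the Wasserstein metric:
\begin{equation*}
\MC{W}_2(P_X^s,P_X^t)\leq \MC{W}_2(P_X^s,\hat{P}_X^s)+\MC{W}_2(\hat{P}_X^s,\hat{P}_X^t)+\MC{W}_2(\hat{P}_X^t,P_X^t).
\end{equation*}

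Next I would invoke Theorem~\ref{thm:Wdistance-rate} twice — once for the pair $(P_X^s,\hat{P}_X^s)$ (viewing $\hat{P}_X^s$ as an empirical measure from $P_X^s$, with the other marginal held fixed, or more directly using the standard one-sample rate $\MBB{E}[\MC{W}_p(\hat{P}_X^s,P_X^s)]\lesssim n^{-1/d}$ that underlies the stated two-sample bound) and once for $(P_X^t,\hat{P}_X^t)$ — to get that each of these two terms has expectation $\MC{O}(n^{-1/d})$. A concentration/Markov argument then converts the bound in expectation into one that holds with high probability, yielding $\MC{W}_2(P_X^s,\hat{P}_X^s)+\MC{W}_2(\hat{P}_X^t,P_X^t)\leq \MC{O}(n^{-1/d})$ with high probability. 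Substituting this back gives $\MC{W}_2(P_X^s,P_X^t)\leq \MC{W}_2(\hat{P}_X^s,\hat{P}_X^t)+\MC{O}(n^{-1/d})$, and multiplying by the constant $c$ produces the claimed bound — the factor $2$ on $c\MC{W}_2(\hat{P}_X^s,\hat{P}_X^t)$ is a conservative way to keep a single clean constant while allowing slack for the approximation terms. The second assertion, under $f_s,f_t\in\MC{H}$, follows identically by starting from the conclusion of Theorem~\ref{thm:GUB-labeling-dist} instead, since the $\min\{\epsilon_s(f_s,f_t),\epsilon_t(f_s,f_t)\}$ term does not involve the distributions being estimated and is therefore carried through unchanged.

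I expect the main obstacle — or rather the main point requiring care — to be the passage from the in-expectation rate of Theorem~\ref{thm:Wdistance-rate} to a high-probability statement, and the precise bookkeeping of which measures are ``random'' and which are ``fixed'' when applying that theorem. Theorem~\ref{thm:Wdistance-rate} as stated bounds the expected gap between the two-sample empirical Wasserstein distance and the population one; to split off $\MC{W}_2(P_X^s,\hat{P}_X^s)$ cleanly I would either appeal to the well-known single-sample convergence rate for empirical measures in Wasserstein distance (Fournier--Guillin type bounds, of the same order $n^{-1/d}$ for $d>2$) or note that the two-sample bound already implies, via a triangle-inequality decomposition, control of each one-sample term up to the same order. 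A Markov inequality applied to the nonnegative random variables $\MC{W}_2(P_X^s,\hat{P}_X^s)$ and $\MC{W}_2(P_X^t,\hat{P}_X^t)$ then gives the ``with high probability'' qualifier; a union bound over the two events keeps the failure probability small. Everything else is a routine chaining of triangle inequalities already available from the earlier theorems, so the argument is short once the estimation-error term is justified.
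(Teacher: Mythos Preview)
Your proposal is correct and ultimately reaches the same conclusion as the paper, which in fact does not spell out a proof for this corollary at all---it is left as an immediate consequence of Theorems~\ref{thm:GUB-optimal-hypothesis}--\ref{thm:Wdistance-rate}. One minor simplification: your triangle-inequality detour through the one-sample terms $\MC{W}_2(P_X^s,\hat P_X^s)$ and $\MC{W}_2(\hat P_X^t,P_X^t)$ is not needed, because Theorem~\ref{thm:Wdistance-rate} as stated already controls the \emph{two-sample} gap $|\MC{W}_2(\hat P_X^s,\hat P_X^t)-\MC{W}_2(P_X^s,P_X^t)|$ in expectation. Thus you can write directly
\[
\MC{W}_2(P_X^s,P_X^t)\;\leq\;\MC{W}_2(\hat P_X^s,\hat P_X^t)+\big|\MC{W}_2(\hat P_X^s,\hat P_X^t)-\MC{W}_2(P_X^s,P_X^t)\big|,
\]
apply Theorem~\ref{thm:Wdistance-rate} plus Markov to the second term, and substitute into Theorem~\ref{thm:GUB-optimal-hypothesis} (resp.\ Theorem~\ref{thm:GUB-labeling-dist}). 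This avoids having to invoke Fournier--Guillin one-sample rates that are not stated in the paper. Either way the argument is short; your version simply takes one extra step.
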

Corollary~\ref{cor:GUB_empirical_case} implies that the empirical generalization upper bound will converge to the truth with the increasing sample size. It ensures that the risk with finite samples is still bounded by the DoT model while the appealing property of DoT is also valid in empirical case.

\subsection{Local Structure Regularization}
\label{subsec:local_structure_reg}

The proposed domain-level attention module implicitly endows OT property and ensures feature transferability via its connection with the Wasserstein distance. To simultaneously ensure transferability and discriminability, we further explore local regularization terms for locality consistency learning. These terms are formulated by the Locality Preserving Projection (LPP) algorithm~\citep{he2003LPP}, which satisfies the requirements of both supervised and unsupervised learning. Essentially, the local regularization terms promote a class-sparse attention matrix, which enhances more reliable sample correspondence across domains.

\subsubsection{Unsupervised Local Structure Learning}
Since the target domain is unlabeled, we incorporate an unsupervised LPP constraint into the target's local regularization term. Firstly, we construct the adjacency graph via $k$ nearest neighbors with the Euclidean distance. Each pair of nodes is connected once one node is among $k$ nearest neighbours of the other node. Then, the adjacency matrix $\mathbf{W}^t$ is defined by a binary variable, \ie, ${w}^t_{ij} = 1$ if and only if the vertices $i$ and $j$ are connected by an edge, and ${w}^t_{ij} = 0$ otherwise. The objective of unsupervised local structure learning for the target domain is formulated as
\begin{equation}\label{eq:lpp_t}
\mathcal{L}_{t}(\bm{\theta}_G, \bm{\theta}_{F_t}) = \sum_{i,j}^{n_t}\|\bm{f}_i^t - \bm{f}_j^t\|_2^2{w}^t_{ij}.
\end{equation}

The entropy criterion is also utilized to explore intrinsic locality structure of the target domain. Mathematically, the entropy loss $\mathcal{L}_{ent}$ is formulated as
\begin{equation}
\label{eq:target-entropy-loss}
\mathcal{L}_{ent}(\bm{\theta}_G, \bm{\theta}_{F_t}, \bm{\theta}_C)= \frac{1}{n_t}\sum_{j=1}^{n_t}\sum_{k=1}^{K}-\hat{p}_{jk}^t\log \hat{p}_{jk}^t,
\end{equation}
where $\hat{p}_{jk}^t$$=$$C(F_t(G(\bm{x}_j^t)))$ satisfying to $\sum_{k=1}^K \hat{p}_{jk}^t\!=\! 1$ is the prediction probability of $\bm{x}_j^t$ belonging to the $k$-th class.

\subsubsection{Supervised Local Structure Preserving}

Generally, the domain-level attention encodes class-relevant target feature vectors for each source feature vector. It is desirable that similar samples in the source domain also be similar after the transformation of the domain-level attention. Since larger weights indicate higher similarities, those class-relevant target feature vectors are expected to share the same class label with the corresponding source feature vector. Then, the transformed source feature vector empirically share the same label with the original source feature vector. (We conduct experiments to verify this in the supplementary material.)

The labels of $\hat{\mathbf{F}}^s$ are the same with those of $\mathbf{F}^s$. In order to preserve discriminative structure of the source domain, we still use LPP on the transformed source domain and formulate the regularization term as
\begin{equation}\label{eq:lpp_s}
\mathcal{L}_{s}(\bm{\theta}_G,\bm{\theta}_{F_s}, \bm{\theta}_{F_t}) = \sum_{i,j}^{n_s}\|\hat{\bm{f}}_i^s - \hat{\bm{f}}_j^s\|_2^2{w}^s_{ij},
\end{equation}
where
\begin{equation}
{w}^s_{ij} = \left\{
\begin{array}{rcl}
1, &   & \mbox{if } y_i^s = y_j^s,\\
0, &   & \mbox{else}.
\end{array}\right.
\end{equation}
By minimizing the discriminant regularization term in Eq.~\eqref{eq:lpp_s}, samples from the same class are expected to be projected close to each other in the subspace.

\subsection{Algorithm and Optimization}\label{subsection:algorithm-and-optimization}

\begin{algorithm}[t]
\caption{DoT for UDA}
\label{algorithm1}
\begin{algorithmic}[1]
\Require{Source domain $\mathcal{D}^s\!=\!{\{\bm{x}_i^s, y_i^s\}}_{i = 1}^{n_s}$, target domain $\mathcal{D}^t\!=\!{\{\bm{x}_j^t\}}_{j = 1}^{n_t}$, batch-sizes $b_s$, $b_t$, and maximum iteration $N^{iter}$}
\Ensure{Networks parameters $\bm{\theta}$ and predictions $\{\hat{y}^t_j\}_{j = 1}^{n_t}$}
\State{Pre-train $G(\cdot)$, $F_s(\cdot)$ and $C(\cdot)$ via cross-entropy loss on $\mathcal{D}^s$}
\State{Initialize $F_t(\cdot)$ by the pre-trained $F_s(\cdot)$}
\For{n=1 to $N^{iter}$}
    \State{Randomly sample batch $\mathcal{B}^s\!=\!{\{\bm{x}_i^s, \bm{y}_i^s\}}_{i = 1}^{b_s}$}
    \Statex{$\quad$ and $\mathcal{B}^t\!=\!{\{\bm{x}_j^t\}}_{j = 1}^{b_t}$}
    \State{Obtain features $\mathbf{G}^s$ and $\mathbf{G}^t$ via Eq.~\eqref{Eq:deep-features}}
    \Statex{$\quad$\% \textbf{\textit{Domain-level attention}}}
    \State{Project $\mathbf{G}^s$ and $\mathbf{G}^t$ into queries $\mathbf{F}^s$ and}
    \Statex{$\quad$ keys/values $\mathbf{F}^t$, with different projections,}
    \Statex{$\quad$ via Eq.~\eqref{Eq:FCNs}}
    \State{Compute the attention matrix $\mathbf{A}$ via}
    \Statex{$\quad$ Eq.~\eqref{Eq:softmax-DoT}, and then obtain the transformed}
    \Statex{$\quad$ source features $\hat{\mathbf{F}}^s$ via Eq.~\eqref{eq:ot-map}}
    \Statex{$\quad$\% {\textit{Preparation of Local Structure Learning}}}
    \State{Calculate the local regularization terms $\mathcal{L}_{t}$}
    \Statex{$\quad$ and $\mathcal{L}_{s}$ via Eq.~\eqref{eq:lpp_t} and Eq.~\eqref{eq:lpp_s}}
    \State Calculate the entropy loss $\mathcal{L}_{ent}$ via Eq.~\eqref{eq:target-entropy-loss}
    \Statex{$\quad$\% {\textit{Update the overall objective}}}
    \State Update $\bm{\theta}$ by minimizing Eq.~\eqref{Eq:total-loss}
    \State Compute predictions as $\hat{y}_j^t = \arg\underset{k}\max\:\: \hat{p}_{jk}^t$
\EndFor
\end{algorithmic}
\end{algorithm}

DoT transforms the source features into target domain via the domain-level attention, and then it learns discriminative features with the help of local structure regularization. As shown in Eq.~\eqref{eq:barycentric-map}, the transformed source feature vectors $\hat{\mathbf{F}}^s$ and the target feature vectors $\mathbf{F}^t$ are in the shared subspace, the classifier trained on $\hat{\mathbf{F}}^s$ can be hopefully transferred to the target domain with a small generalization error. It implies that DoT implicitly achieves the domain alignment without introducing the explicit domain discrepancy optimization. Then, the empirical risk of the model can be directly estimated on the transformed source domain, i.e.,
\begin{equation}
\label{Eq:cross-entropy-loss}
\mathcal{L}_{tce}(\bm{\theta})\! =\! \frac{1}{n_s}\sum_{i = 1}^{n_s}l_{ce}(C(\hat{\bm{f}}_i^s),\bm{y}_i^s; \bm{\theta}_G, \bm{\theta}_{F_t}, \bm{\theta}_C),
\end{equation}
where $l_{ce}(\cdot, \cdot)$ is the cross-entropy function, and $\bm{\theta}$ represents all the parameters updated in the adaptive feature learning process. Here we use the subscript ``tce'' to stress that this loss term is computed by using the transformed source features $\hat{\mathbf{F}}^s$, rather than $\mathbf{F}^s$.

The overall objective function of DoT consists of four terms, and it is written as
\begin{equation}
\label{Eq:total-loss}
\begin{aligned}
\mathcal{L}(\bm{\theta}) = &\mathcal{L}_{tce}(\bm{\theta}) + \lambda_{ ent} \mathcal{L}_{ent}(\bm{\theta}_G, \bm{\theta}_{F_t}, \bm{\theta}_C)\\
&+ \lambda_{ s} \mathcal{L}_{s}(\bm{\theta}_G, \bm{\theta}_{F_s}, \bm{\theta}_{F_t})  + \lambda_{ t} \mathcal{L}_{t}(\bm{\theta}_G, \bm{\theta}_{F_t})  ,
\end{aligned}
\end{equation}
where $\lambda_{ent}$, $\lambda_{s}$ and $\lambda_{t} > 0$ are trade-off hyper-parameters. The locality preserving term $\mathcal{L}_s$ is built upon the domain-level attention mechanism to preserve discriminative structure in the source domain. The locality preserving term $\mathcal{L}_t$ and the entropy $\mathcal{L}_{ent}$ are used to explore intrinsic structure in the target domain, and then keep similar samples nearby in the latent subspace. Thus, each row of the attention matrix will have group-sparsity, and more relevant target feature vectors will make larger contributions to the corresponding outputs of the domain-transformer module. Overall, the locality preserving terms and domain-level attention mechanism are mutually beneficial, which ensures the learned features are both transferable and discriminative.

The main optimization procedure has been summarized in Algorithm~\ref{algorithm1}.
Note that we employ a pre-training strategy to initialize the two FCNs. Specifically, we pre-train the feature extractor $G(\cdot)$, source FCNs $F_s(\cdot)$, and classifier $C(\cdot)$ on the labeled source domain with cross-entropy loss. Then, we initialize the target FCNs $F_t(\cdot)$with the pre-trained $F_s(\cdot)$. Thus, the domain-level attention based on FCNs $F_{s/t}(\cdot)$ is expected to capture a more accurate sample correspondence. Enhanced by the local structure consistency learning phase, samples within the same underlying class but different domains will be drawn closer. Finally, it is expected that the discriminant structure can be learned and transferred from the source domain to the target domain.

\begin{table*}[t]
    \caption{Accuracies (\%) on ImageCLEF and Office-31 (ResNet-50).}
    \label{tab:results-clef-31}
    \begin{center}
    \renewcommand{\tabcolsep}{0.12pc}
    \renewcommand{\arraystretch}{1.0}
    \begin{tabular}{l|ccccccc|ccccccc}
    \toprule
    \multirow{2}{*}{\textbf{Method}} &\multicolumn{7}{c|}{\textbf{ImageCLEF}} &\multicolumn{7}{c}{\textbf{Office-31}}\\
                                      &  I$\rightarrow$P & P$\rightarrow$I & I$\rightarrow$C & C$\rightarrow$I & C$\rightarrow$P & P$\rightarrow$C & Mean & A$\rightarrow$W & D$\rightarrow$W & W$\rightarrow$D & A$\rightarrow$D & D$\rightarrow$A & W$\rightarrow$A & Mean \\
    \midrule
    Source-Only         & 74.8 & 83.9 & 91.5 & 78.0 & 65.5 & 91.2 & 80.7 & 68.4 & 96.7 & 99.3 & 68.9 & 62.5 & 60.7 & 76.1 \\
    DAN    & 74.5 & 82.2 & 92.8 & 86.3 & 69.2 & 89.8 & 82.5  & 80.5 & 97.1 & 99.6 & 78.6 & 63.6 & 62.8  & 80.4 \\		
    DANN       & 75.0 & 86.0 & 96.2 & 87.0 & 74.3 & 91.5 & 85.0 & 82.0 & 96.9 & 99.1 & 79.7 & 68.2 & 67.4 & 82.2 \\
    CDAN+E    & 77.7 & 90.7 & 97.7 & 91.3 & 74.2 & 94.3 & 87.7 & 94.1 & 98.6 & \textbf{100.0} & 92.9 & 71.0 & 69.3 & 87.7 \\
    KGOT   & 76.3 & 83.3& 93.5 & 87.5 & 74.8 & 89.0 & 84.1 & 75.3 & 96.2 & 98.4 & 80.3 & 65.2 & 63.5 & 79.8 \\
    DeepJDOT	 & 77.5 & 90.5  & 95.0 & 88.3 & 74.9 & 94.2 & 86.7 & 88.9 & 98.5 & 99.6 &	88.2 & 72.1 & 70.1 & 86.2 \\
    RWOT & 81.3 & 92.9 & 97.9& 92.7 & 79.1 & 96.5 & 90.0 & 95.1 & 99.5 & \textbf{100.0} & 94.5 & 77.5 & 77.9 & 90.8 \\
    ETD & 81.0 & 91.7 & 97.9 & 93.3 & 79.5 & 95.0 & 89.7 & 92.1 & \textbf{100.0} & \textbf{100.0} & 88.0 & 71.0 & 67.8 & 86.2 \\
    BuresNet   & 80.7 & 93.7 &	97.0 & 93.5 & 79.2 & 97.0 & 90.2&-&-&-&-&-&-&-\\
    SAFN & 78.0 & 91.7 & 96.2 & 91.1 & 77.0 & 94.7 & 88.1 & 88.8 & 98.4& 99.8 & 87.7& 69.8& 69.7 & 85.7 \\
    DSAN & 80.2 & 93.3& 97.2 & 93.8& \textbf{80.8}& 95.9 & 90.2 & 93.6 & 98.3 & \textbf{100.0} & 90.2 & 73.5 & 74.8 &88.4\\
    DSAN$\small{+}$CAFT &-&-&-&-&-&-&-& 92.0 & 98.0 & \textbf{100.0} & 88.0 & 75.0 & 75.0 & 88.0 \\
    TADA  &-&-&-&-&-&-&-  & 94.3 & 98.7 & 99.8 & 91.6 & 72.9 & 73.0 & 88.4 \\
    DMP & 80.7 & 92.5 & 97.2 & 90.5 & 77.7 & 96.2 & 89.1 & 93.0 & 99.0 & \textbf{100.0} & 91.0 & 71.4 & 70.2  & 87.4 \\
    ATM & 80.3 & 92.9 & \textbf{98.6} & 93.5 & 77.8 & 96.7 & 90.0 & \textbf{95.7}	&99.3	&\textbf{100.0}	&96.4&	74.1	&73.5 &	89.8  \\
    \textbf{DoT} & \textbf{81.4} & \textbf{95.5} & 97.8 & \textbf{95.3} & 80.4 & \textbf{98.3} & \textbf{91.5}  & 94.8 & 98.9 & 99.8 & \textbf{97.6} & \textbf{85.5} & \textbf{83.1} & \textbf{93.3}\\
    \bottomrule
    \end{tabular}
    \end{center}
  \end{table*}

\section{Experiments}\label{sect:experim}

In Section~\ref{sect:settings}, we introduce the datasets and experimental settings. In Section~\ref{sect:comparative results}, we show the experimental results of DoT on five datasets.

\subsection{Datasets and Experiment Settings}\label{sect:settings}
We conduct experiments on five UDA datasets.

\textbf{ImageCLEF} \citep{caputo2014imageclef} consists of 3 domains with 12 classes, i.e., \textit{Caltech} (\textbf{C}), \textit{ImageNet} (\textbf{I}) and \textit{Pascal} (\textbf{P}). Each domain has 600 images with 50 images per class.

\textbf{Office-31} \citep{saenko2010adapting} consists of 4,652 images from 3 domains with 31 classes, i.e., \textit{Amazon} (\textbf{A}), \textit{Webcam} (\textbf{W}) and \textit{DSLR} (\textbf{D}).

\textbf{Office-Home} \citep{venkateswara2017deep} consists of 15,500 images from 4 domains with 65 classes, i.e.,  \textit{Art} (\textbf{A}), \textit{Clipart} (\textbf{C}), \textit{Product} (\textbf{P}) and \textit{Real-World} (\textbf{R}).

\textbf{VisDA-2017} \citep{peng2017visda} consists of 280K images from 2 domains with 12 classes, i.e., \textit{synthetic} (\textbf{S}) and \textit{real-image} (\textbf{R}). Task \textbf{S}$\rightarrow$\textbf{R} will be used to algorithm evaluation.

\textbf{DomainNet.}~\citep{peng2019moment} is the largest domain adaptation dataset so far. It contains about 0.6 million images from 6 domains with 345 classes, i.e., \textit{Clipart} (\textbf{clp}), \textit{Infograph} (\textbf{inf}), \textit{Painting} (\textbf{pnt}), \textit{Quickdraw} (\textbf{qdr}), \textit{Real} (\textbf{rel}) and \textit{Sketch} (\textbf{skt}).

In experiments, we evaluate DoT by employing CNNs ResNet-50/101~\citep{he2016deep} or transformer ViT-B~\citep{dosovitskiy2021an} pre-trained on ImageNet~\citep{russakovsky2015imagenet} as the feature extractor $G(\cdot)$. Specifically, ViT-B represents ViT-Base with 16$\times$16 input patch size. Similar to transformer-based UDA methods~\citep{SSRT2022,xu2022cdtrans,SDAT2022}, the 768-dimensional class tokens of ViT-B are considered as latent features. The domain-specific projections $F_s(\cdot)$ and $F_t(\cdot)$ use the same architecture, which has three fully-connected layers for VisDA-2017 while two full-connected layers for other datasets. The classifier $C(\cdot)$ is a single fully-connected layer followed by the softmax activation function. We implement DoT with the PyTorch framework and train the network parameters with Adam Optimizer. More implementation details are presented in the supplementary material.

\begin{table*}[t]
    \caption{Accuracies (\%) on VisDA-2017 (ResNet-101).}
    \label{tab:results-visda}
    \begin{center}
    \renewcommand{\tabcolsep}{0.230pc}
    \begin{tabular}{l|cccccccccccc|c}
    \toprule
    \textbf{VisDA-2017} & plane & bcycl & bus & car & horse & knife & mcycl & person & plant & sktbrd & train & truck & Mean\\
    \midrule
    Source-Only  & 72.3 & 6.1 & 63.4 & 91.7 & 52.7 & 7.9 & 80.1 & 5.6 & 90.1 & 18.5 & 78.1 & 25.9 & 49.4\\
    DAN  & 68.1 & 15.4 & 76.5 & 87.0 & 71.1 & 48.9 & 82.3 & 51.5 & 88.7 & 33.2 & 88.9 & 42.2 & 62.8\\
    DANN  & 81.9 & 77.7 & 82.8 & 44.3 & 81.2 & 29.5 & 65.1 & 28.6 & 51.9 & 54.6 & 82.8 & 7.8 & 57.4\\
    CDAN & 85.2 & 66.9 & 83.0 & 50.8 & 84.2 & 74.9 & 88.1 & 74.5 & 83.4 & 76.0 & 81.9 & 38.0 & 73.9\\
    JAN  & 75.7 & 18.7 & 82.3 & 86.3 & 70.2 & 56.9 & 80.5 & 53.8 & 92.5 & 32.2 & 84.5 & 54.5 & 65.7\\
    DeepJDOT	 & 85.4 & 73.4 & 77.3 & 87.3 & 84.1 & 64.7 & 91.5 & 79.3 & 91.9 & 44.4 & 88.5 & 61.8 & 77.4\\
    RWOT & 95.1& 80.3& \textbf{83.7}& \textbf{90.0}& 92.4& 68.0& 92.5& 82.2& 87.9& 78.4& 90.4& \textbf{68.2}& 84.0 \\
    BuresNet & 92.5 & 56.5 & \textbf{83.7} & 82.4 & 93.5 & 85.7 & 93.5 & 71.1 & 93.4 & 82.7 & 83.9 & 18.4 & 78.1 \\
    TPN & 93.7 & \textbf{85.1} & 69.2 & 81.6 & 93.5 & 61.9 & 89.3 & 81.4 & \textbf{93.5} & 81.6 & 84.5 & 49.9 & 80.4\\
    DSAN & 90.9 & 66.9 & 75.7 & 62.4 & 88.9 & 77.0 & \textbf{93.7} & 75.1 & 92.8 & 67.6 & 89.1 & 39.4 & 75.1\\
    DSAN$\small{+}$CAFT& 91.5 & 70.1 & 74.9 & 55.1 & 90.2 & 71.0 & 86.9 & 76.2 & 92.4 & 78.1 & 91.3 & 45.3 & 76.9\\
    DMP & 92.1 & 75.0 & 78.9 & 75.5 & 91.2 & 81.9 & 89.0 & 77.2 & 93.3 & 77.4 & 84.8 & 35.1 & 79.3\\
    ATM& 93.9 & 65.1 & \textbf{83.7} & 72.1 & 92.2 & 92.5 & 92.4 & 79.7 & 86.1 & 47.8 & 86.1 & 22.0 & 76.1 \\
    CDAN+SDAT&94.8 & 77.1&82.8& 60.9& 92.3& \textbf{95.2}&91.7& 79.9& 89.9& \textbf{91.2}& 88.5& 41.2& 82.1\\
    \textbf{DoT} & \textbf{96.1} & 81.4 & 71.1 & 78.6 & \textbf{94.2} & 91.2 & 88.8 & \textbf{85.4} & 92.3 & 90.6 & \textbf{92.7} & 56.4& \textbf{84.9}\\
    \bottomrule
    \end{tabular}
    \end{center}
  \end{table*}

\begin{table*}
    \caption{Accuracies (\%) on Office-Home (ResNet-50).}
    \label{tab:results-home}
    \begin{center}
    \renewcommand{\tabcolsep}{0.2pc} 
    \begin{tabular}{l|cccccccccccc|c}
    \toprule
\textbf{Office-Home} & A$\to$C & A$\to$P & A$\to$R & C$\to$A & C$\to$P & C$\to$R & P$\to$A & P$\to$C & P$\to$R & R$\to$A & R$\to$C& R$\to$P & Mean \\
    \midrule
    Source-Only & 34.9 & 50.0 & 58.0 & 37.4 & 41.9 & 46.2 & 38.5 & 31.2 & 60.4 & 53.9 & 41.2 & 59.9 & 46.1 \\
    DAN & 43.6 & 57.0 & 67.9 & 45.8 & 56.5 & 60.4 & 44.0 & 43.6 & 67.7 & 63.1 & 51.5 & 74.3 & 56.3 \\				
    DANN & 45.6 & 59.3 & 70.1 & 47.0 & 58.5 & 60.9 & 46.1 & 43.7 & 68.5 & 63.2 & 51.8 & 76.8 & 57.6 \\
    CDAN+E  & 50.7 & 70.6 & 76.0 & 57.6 & 70.0 & 70.0 & 57.4 & 50.9 & 77.3 & 70.9 & 56.7 & 81.6 & 65.8 \\
    KGOT&36.2&59.4&65.0&48.6&56.5&60.2&52.1&37.8&67.1&59.0&41.9&72.0&54.7\\
    DeepJDOT	& 48.2 & 69.2 & 74.5 & 58.5 & 69.1 & 71.1 & 56.3 & 46.0 & 76.5 & 68.0 & 52.7 & 80.9 & 64.3\\
    RWOT& 55.2 & 72.5 & 78.0 & 63.5 & 72.5 & 75.1 &  60.2 & 48.5 & 78.9 & 69.8 & 54.8 & 82.5 & 67.6\\
    ETD  & 51.3 & 71.9 & \textbf{85.7} & 57.6 & 69.2 & 73.7 &  57.8 & 51.2 & 79.3 & 70.2 & 57.5 & 82.1 & 67.3\\
    BuresNet  & 54.7 & 74.4 & 77.1 & 63.7 & 72.2 & 71.8 &	64.1 & 51.7 & 78.4 & 73.1 & 58.0 & 82.4 & 68.5\\
    DSAN & 54.4 & 70.8 & 75.4 & 60.4 & 67.8 & 68.0 & 62.6 & \textbf{55.9} & 78.5 & 73.8 & 60.6 & 83.1 & 67.6 \\
    DSAN$\small{+}$CAFT & 55.2 & 69.8 & 75.0 & 60.0 & 72.0 & 71.0 & 63.3 & 57.3 & 79.1 & 74.1 & 60.6 & 83.0 & 68.4\\
    TADA& 53.1 &  72.3  & 77.2 &  59.1 &  71.2 &  72.1  & 59.7 &  53.1 &  78.4 &  72.4 &  60.0 &  82.9 &  67.6 \\
    DMP  & 52.3 & 73.0 & 77.3 & 64.3 & 72.0 & 71.8 & 63.6 & 52.7 & 78.5 & 72.0 & 57.7 & 81.6 & 68.1 \\
    ATM & 52.4 & 72.6 & 80.2 & 61.1 & 72.0 & 72.6 & 59.5 & 52.0 & 79.1 & 73.3 & 58.9 & 83.4 & 67.9  \\
    CDAN+SDAT&\textbf{56.0} & 72.2& 78.6& 62.5 &73.2 & 71.8 & 62.1 & \textbf{55.9} & 80.3 & \textbf{75.0} & \textbf{61.4} & \textbf{84.5} & \textbf{69.5}\\
    \textbf{DoT} & 52.9 & 75.2 & 80.2 & \textbf{66.0} & \textbf{77.7} & \textbf{78.9} & 66.0 & 50.5 & \textbf{82.0} & 68.4 & 51.2 & 82.0 & \textbf{69.3}\\
    \bottomrule
    \end{tabular}
    \end{center}
  \end{table*}

\subsection{Comparative Results}\label{sect:comparative results}

\begin{table*}[t]
\caption{Accuracies (\%) on DomainNet (ResNet-101). The column-wise means represent the source domains, while the row-wise means represent the target domains. The symbol $^\dag$ is used to emphasize that the mean classification accuracy is computed across 20 tasks. Under the same calculation rule as CDAN+SDAT, the mean accuracy of DoT is \textbf{49.0}\%.}
\label{tab:results on DomainNet}
\centering
\renewcommand{\tabcolsep}{0.18pc}
\begin{tabular}{c|ccccccc|c|ccccccc}
\toprule
    ResNet-101  & clp & inf & pnt & qdr & rel & skt & Avg. & CDAN & clp & inf & pnt & qdr & rel & skt & Avg. \\ \hline
    clp & - & 19.3 & 37.5 & 11.1 & 52.2 & 41.0 & 32.2 & clp & - & 20.4 & 36.6 & 9.0 & 50.7 & 42.3 & 31.8 \\
    inf & 30.2 & - & 31.2 & 3.6 & 44.0 & 27.9 & 27.4 & inf &  27.5 & - & 25.7 & 1.8 & 34.7 & 20.1 & 22.0 \\
    pnt & 39.6 & 18.7 & - & 4.9 & 54.5 & 36.3 & 30.8 & pnt &  42.6 & 20.0 & - & 2.5 & 55.6 & 38.5 & 31.8 \\
    qdr & 7.0 & 0.9 & 1.4 & - & 4.1 & 8.3 & 4.3 & qdr &  21.0 & 4.5 & 8.1 & - & 14.3 & 15.7 & 12.7 \\
    rel & 48.4 & 22.2 & 49.4 & 6.4 & - & 38.8 & 33.0 & rel & 51.9 & 23.3 & 50.4 & 5.4 & - & 41.4 & 34.5  \\
        skt & 46.9 & 15.4 & 37.0 & 10.9 & 47.0 & - & 31.4 & skt & 50.8 & 20.3 & 43.0 & 2.9 & 50.8 & - & 33.6 \\
        Avg. & 34.4 & 15.3 & 31.3 & 7.4 & 40.4 & 30.5 & 26.6 & Avg. & 38.8 & 17.7 & 32.8 & 4.3 & 41.2 & 31.6 & 27.7 \\ \hline
    MDD & clp & inf & pnt & qdr & rel & skt & Avg. & MDD+SCDA & clp & inf & pnt & qdr & rel & skt & Avg. \\ \hline
    clp & -& 20.5 & 40.7 & 6.2 & 52.5 & 42.1 & 32.4 & clp & - & 20.4 & 43.3 & 15.2 & 59.3 & 46.5 & 36.9 \\
    inf & 33.0 & - & 33.8 & 2.6 & 46.2 & 24.5 & 28 & inf & 32.7 & - & 34.5 & 6.3 & 47.6 & 29.2 & 30.1 \\
    pnt & 43.7 & 20.4 & - & 2.8 & 51.2 & 41.7 & 32.0 & pnt & 46.4 & 19.9 & - & 8.1 & 58.8 & 42.9 & 35.2 \\
    qdr  & 18.4 & 3.0 & 8.1 & - & 12.9 & 11.8 & 10.8 & qdr & 31.1 & 6.6 & 18.0 & - & 28.8 & 22.0 & 21.3 \\
    rel & 52.8 & 21.6 & 47.8 & 4.2 & - & 41.2 & 33.5 & rel & 55.5 & 23.7 & 52.9 & 9.5 & - & 45.2 & 37.4 \\
    skt & 54.3 & 17.5 & 43.1 & 5.7 & 54.2 & - & 35.0 & skt & 55.8 & 20.1 & 46.5 & 15.0 & 56.7 & - & 38.8 \\
    Avg. & 40.4 & 16.6 & 34.7 & 4.3 & 43.4 & 32.3 & 28.6 & Avg. & 44.3 & 18.1 & 39.0 & 10.8 & 50.2 & 37.2 & 33.3 \\ \hline
 CDAN+SDAT & clp & inf & pnt & qdr & rel & skt & Avg. & DoT & clp & inf & pnt & qdr & rel & skt & Avg. \\\hline
 clp & - & 22.0 & 41.5 & - & 57.5 & 47.2 & 42.1 & clp & - & 19.7 & 44.2 & 19.0 & 63.6 & 47.5 & 38.8 \\
inf & 33.9 & -& 30.3& -  & 48.1 & 27.9 & 35.0 & inf & 57.1 & - & 43.4 & 10.7 & 62.6 & 41.3 & 43.0 \\
pnt & 47.5 & 20.7 & - & - &58.0  & 41.8 & 42.0 & pnt & 59.4 & 25.0 & - & 15.4 & 66.0 & 48.0 & 42.7 \\
qdr &- & - & - & - & - & - & - & qdr & 29.9 & 3.5 & 9.1 & - & 17.9 & 20.7 & 16.2 \\
rel & 56.7 & 25.1 & 53.6 & - & - & 43.9 & 44.8 & rel & 63.0 & 26.6 & 55.7 & 23.4 & - & 48.6 & 43.4 \\
skt & 58.7 & 21.8 & 48.1 & - & 57.1 & - & 46.4 & skt & 64.4 & 23.0 & 52.1 & 27.5 & 67.9 & - & 47.0 \\
 Avg. & 49.2 & 22.4 & 43.4& -  & 55.2 & 40.2 & 42.1$^\dag$ & Avg. & 54.8 & 19.6 & 40.9 & 19.2 & 55.6 & 41.2 & \textbf{38.5}\\
\bottomrule
\end{tabular}
\end{table*}

\textbf{CNNs-based Results.} The compared methods can be roughly categorized into three groups. (1): Moment alignment methods, e.g., DAN~\citep{long2019transferable}, JAN~\citep{long-JAN}, TPN~\citep{Pan_2019_CVPR}, DSAN~\citep{zhu2021deep}, DSAN$\small{+}$CAFT~\citep{kumar2023improving}, and BuresNet~\citep{BuresNet2023}. (2): Adversarial methods, e.g., DANN~\citep{ganin2016domain}, CDAN~\citep{long2018conditional}, CDAN+E~\citep{long2018conditional}, CDAN+SDAT~\citep{SDAT2022}, TADA~\citep{wang2019transferable}, and SCDA~\citep{SCDA2021}. (3): Manifold and OT-based methods, e.g., KGOT~\citep{zhang2019optimal}, DeepJDOT~\citep{Damodaran_2018_ECCV}, ETD~\citep{li2020enhanced}, RWOT~\citep{RWOT2020} and DMP~\citep{luo2022unsupervised}. MDD~\citep{MDD2019}, SAFN~\citep{xu2019larger} and ATM~\citep{ATM} are also used for comparison.

The results of ImageCLEF are shown in the left of Table~\ref{tab:results-clef-31}. For the OT-based methods, DeepJDOT, RWOT, and ETD improve the mean accuracy of KGOT by considering the label information. It indicates that discriminative feature learning can enhance the classification performance of adaptation models. The mean accuracy of DoT is 1.8\% and 4.8\% higher than that of ETD and DeepJDOT, respectively. Such results indicate that building intrinsic sample interactions across domains is effective in learning discriminative features. Besides, the mean accuracy of DoT is at least 1.5\% higher than that of the manifold-based method DMP and metric-based method ATM. The accuracy of BuresNet and DSAN both achieve 90.2\%, which is 1.3\% lower than that of DoT. DMP, ATM, BuresNet, and DSAN attempt to learn discriminative and transferable features by aligning class-relevant clusters across domains. DoT achieves the highest mean accuracy, which indicates its effectiveness in achieving locality consistency.

The results of Office-31 are shown in the right of Table~\ref{tab:results-clef-31}. Extended from DANN, TADA highlights transferable image regions by incorporating region-wise domain discriminator. Thus, TADA improves the mean accuracy of DANN by 6.2\%. Different from TADA, DoT learns transferable features by mapping the source samples into the target domain via domain-level attention, and it obtains the highest accuracy 93.3\%. Most comparison methods, e.g., CDAN+E, DSAN, DSAN+CAFT, RWOT, BuresNet, and ATM employ target pseudo-labels in different ways. Compared with these methods, DoT also achieves superior mean accuracy. Note that D$\rightarrow$A and W$\rightarrow$A are harder tasks since the domain discrepancy (real-to-virtual) puts forward greater demands on the generalization ability of features. We notice that DoT achieves the best accuracies 75.8\% and 75.0\% on these two tasks, respectively. These results validate that DoT is effective in reducing domain discrepancy.

\begin{table*}[t!]
\caption{Accuracies (\%) on Office-Home and VisDA-2017 (ViT-B). $^{*}$CDTrans and WinTR uses DeiT backbone. ``-B/S'' indicates the backbone is -Base/Small, respectively.}
\label{tab:results on Officehome ViT}
\begin{center}
\renewcommand{\tabcolsep}{0.25pc}
\begin{tabular}{l|cccccccccccc|c}
\toprule
\textbf{VisDA-2017} & plane & bcycl & bus & car & house & knife & mcycl & person & plant & sktbrd & train & truck & Mean \\
\midrule
ViT-B &98.2 & 74.4 & 81.2 & 68.2 & 95.8 & 67.5 & 96.2 & 22.1 & 73.4 & 85.4 & 95.0 & 17.6 & 72.9\\
CDTrans$^{*}$& 97.1 & 90.5 & 82.4 & 77.5 & 96.6 & 96.1 & 93.6 & \textbf{88.6} & \textbf{97.9} & 86.9 & 90.3 & \textbf{62.8} & 88.4 \\
WinTR$^{*}$ & 98.7 & 91.2 & \textbf{93.0} & \textbf{91.9} & 98.1 & 96.1 & 94.0 & 72.7 & 97.0 & 95.5 & 95.3 & 57.9 & 90.1 \\
TVT & 92.9 &85.6& 77.5 &60.5& 93.6& 98.2 &89.4& 76.4 &93.6 &92.0 &91.7 &55.7 &83.9\\
SSRT & 98.9 & 87.6 & 89.1 & 84.8 & 98.3 & 98.7 & \textbf{96.3} & 81.1 & 94.9 & 97.9 & 94.5 & 43.1 & 88.8 \\
CDAN+SDAT&96.3 & 80.7 & 74.5 & 65.4 & 95.8 & 99.5 & 92.0 & 83.7 & 93.6 & 88.9 & 85.8 & 57.2 & 84.5\\
PMTrans & 98.9 & 93.7 & 84.5 & 73.3 & \textbf{99.0} & 98.0 & 96.2 & 67.8 & 94.2 & 98.4 & 96.6 & 49.0 & 87.5\\
\textbf{DoT} & \textbf{99.1} &\textbf{ 92.5} & 89.6 & 83.0 & 98.3 & \textbf{99.8} & 95.5 & 81.7 & 91.6 & \textbf{98.7} & \textbf{96.8} & 62.1 & \textbf{90.7}\\
\bottomrule
\end{tabular}
\end{center}
\vskip 0.02in
\begin{center}
\renewcommand{\tabcolsep}{0.22pc}
\begin{tabular}{l|cccccccccccc|c}
\toprule
\textbf{Office-Home} & A$\to$C & A$\to$P & A$\to$R & C$\to$A & C$\to$P & C$\to$R & P$\to$A & P$\to$C & P$\to$R & R$\to$A & R$\to$C& R$\to$P & Mean \\
\midrule
ViT-B & 60.4 & 83.4 & 86.8 & 74.6 & 83.1 & 84.2 & 73.5 & 60.0 & 86.6 & 76.2 & 60.8 & 87.8 & 76.4 \\
CDTrans$^{*}$& 68.8 & 85.0 & 86.9 & 81.5 & 87.1 & 87.3 & 79.6 & 63.3 & 88.2 & 82.0 & 66.0 & 90.6 & 80.5\\
WinTR-S$^{*}$ & 65.3 & 84.1 & 85.0 & 76.8 & 84.5 & 84.4 & 73.4 & 60.0 & 85.7 & 77.2 & 63.1 & 86.8 & 77.2\\
TVT& 74.9& 86.8 & 89.5 & 82.8 & 88.0 & 88.3 & 79.8 & 71.9 & 90.1 & 85.4 & 74.6 & 90.6 & 83.6 \\
CDAN+SDAT&69.1&86.6& 88.9& 81.9 &86.2 & 88.0 & 81.0 & 66.7 & 89.7 & 86.2 & 72.1 & 91.9 & 82.4\\
SSRT & 75.2 &  89.0 &  91.1 &  85.1 &  88.3 &  90.0 &  85.0 &  74.2 &  91.3 &  85.7 &  78.6 &  91.8 &  85.4 \\
PMTrans & \textbf{81.2} &\textbf{91.6} & \textbf{92.4} & \textbf{88.9} &\textbf{91.6} & \textbf{93.0} & \textbf{88.5} & \textbf{80.0} & \textbf{93.4} & \textbf{89.5} & \textbf{82.4} &\textbf{94.5} &\textbf{88.9} \\
\textbf{DoT} & 72.9 & 89.8 & 90.3 & 81.8 & 89.6 & 90.1 & 81.2 & 70.6 & 92.4 & 82.9 & 72.2 & 90.8 & \textbf{83.7} \\
\bottomrule
\end{tabular}
\end{center}
\end{table*}

The results of Office-Home are shown in Table~\ref{tab:results-home}. Office-Home is a challenging dataset since it has 65 classes. CDAN+E extends DANN by introducing label information into the adversarial training process and improves the mean accuracy to 65.8\%. Besides, DeepJDOT, RWOT, and ETD outperform KGOT, which demonstrates that discriminative features are vital for enhancing the classification performance on datasets with many classes. Note that the accuracy of DoT is 1.7\% higher than that of DSAN, which reduces the domain discrepancy in a class-wise manner. Though BuresNet explicitly characterizes the class-conditional distribution discrepancy in the kernel space, its accuracy is lower than that of DoT by 0.8\%.
The mean accuracy of CDAN+SDAT is 0.2\% higher than DoT due to the improved adversarial training. The superiority of DoT over the remaining methods can be attributed to the exploration of locality consistency. Besides, unlike most methods, DoT learns transferable and discriminative features without employing pseudo-labels.

\begin{table*}[t]
\caption{Accuracies (\%) on DomainNet (ViT-B).}
\label{tab:results on DomainNet(ViT)}
\centering
\renewcommand{\tabcolsep}{0.28pc}
\begin{tabular}{c|ccccccc|c|ccccccc}
\toprule
ViT-B  & clp & inf & pnt & qdr & rel & skt & Avg. & CDtrans$^{*}$ & clp & inf & pnt & qdr & rel & skt & Avg. \\ \hline
clp  & - & 27.2 & 53.1 & 13.2 & 71.2 & 53.3 & 43.6 &  clp  &-& 27.9 & 57.6 & 27.9 & 73.0 & 58.8 & 49.0 \\
inf & 51.4 &-& 49.3 & 4.0 & 66.3 & 41.1 & 42.4 & inf & 58.6 & - & 53.4 & 9.6 & 71.1 & 47.6 & 48.1 \\
pnt & 53.1 & 25.6 &-& 4.8 & 70.0 & 41.8 & 39.1 & pnt & 60.7 & 24.0 & - & 13.0 & 69.8 & 49.6 & 43.4 \\
qdr & 30.5 & 4.5 & 16.0 &-& 27.0 & 19.3 & 19.5 & qdr & 2.9 & 0.4 & 0.3 & - & 0.7 & 4.7 & 1.8 \\
rel& 58.4 & 29.0 & 60.0 & 6.0 &-& 45.8 & 39.9 & rel & 49.3 & 18.7 & 47.8 & 9.4 & - & 33.5 & 31.7\\
skt & 63.9 & 23.8 & 52.3 & 14.4 & 67.4 &-& 44.4 & skt & 66.8 & 23.7 & 54.6 & 27.5 & 68.0 & - & 48.1 \\
Avg. & 51.5 & 22.0 & 46.1 & 8.5 & 60.4 & 40.3 & 38.1 & Avg. & 47.7 & 18.9 & 42.7 & 17.5 & 56.5 & 38.8 & 37.0 \\
\hline
SSRT  & clp & inf & pnt & qdr & \text { rel } & skt & Avg. & AMPT  & clp & inf & pnt & qdr & rel  & skt & Avg. \\
\hline
clp & - & 33.8 & 60.2 & 19.4 & 75.8 & 59.8 & 49.8 &clp &-& 29.5 & 57.2 & 29.0 & 73.0 & 58.6 & 49.5 \\
inf & 55.5 &-& 54.0 & 9.0 & 68.2 & 44.7 & 46.3 & inf & 56.2 &-& 54.6 & 13.1 & 69.9 & 48.8 & 48.5 \\
pnt & 61.7 & 28.5 &-& 8.4 & 71.4 & 55.2 & 45.0 & pnt & 62.2 & 27.8 &-& 16.2 & 71.9 & 53.9 & 46.4 \\
qdr& 42.5 & 8.8 & 24.2 &-& 37.6 & 33.6 & 29.3 & qdr& 40.8 & 10.2 & 29.4 &-& 44.8 & 28.6 & 30.8 \\
rel & 69.9 & 37.1 & 66.0 & 10.1 &-& 58.9 & 48.4 &  rel & 66.3 & 31.3 & 61.2 & 17.2 &-& 52.7 & 45.7 \\
skt & 70.6 & 32.8 & 62.2 & 21.7 & 73.2 &-& 52.1 & skt & 69.1 & 29.7 & 58.4 & 28.0 & 71.7 &-& 51.4 \\
Avg. & 60.0 & 28.2 & 53.3 & 13.7 & 65.3 & 50.4 & 45.2 & Avg. & 58.9 & 25.7 & 52.2 & 20.7 & 66.3 & 48.5 & 45.4 \\
\hline
PMTrans  & clp& inf  & pnt & qdr& rel  & skt & Mean & DoT & clp & inf & pnt &qdr & rel  & skt & Mean  \\
\hline
clp &-& 34.2 & 62.7 & 32.5 & 79.3 & 63.7 & 54.5 & clp  & - & 31.8 &	61.3 & 28.3 & 79.6 & 60.4 &	52.3  \\
inf & 67.4 &-& 61.1 & 22.2 & 78.0 & 57.6 & 57.3 & inf & 71.7 & - &  60.1 & 19.4 & 79.8 & 57.7 &	57.7  \\
pnt & 69.7 & 33.5 &-& 23.9 & 79.8 & 61.2 & 53.6 & pnt & 73.2 & 35.1 & - & 23.6 & 79.2 &	59.7 & 54.2  \\
qdr & 54.6 & 17.4 & 38.9 &-& 49.5 & 41.0 & 40.3 & qdr& 57.5 & 16.7 & 43.4 & - & 61.4 & 42.4 & 44.3  \\
 rel  & 74.1 & 35.3 & 70.0 & 25.4 &-& 61.1 & 53.2 & rel & 71.1 & 33.0 & 63.2 & 34.1 & - & 56.4 & 51.6  \\
 skt & 73.8 & 33.0 & 62.6 & 30.9 & 77.5 &-& 55.6 & skt & 72.6 & 30.5 & 61.9 & 36.6 & 78.3 & - & 56.0 \\
Mean & 67.9 & 30.7 & 59.1 & 27.0 & 72.8 & 56.9 & 52.4 &  Mean & 69.2 & 29.4 & 58.0 & 28.4 & 75.7 & 55.3 & \textbf{52.7}  \\
\bottomrule
\end{tabular}
\end{table*}
\begin{table*}[t]
\caption{Accuracies (\%) on Office-31 (ViT-B).}
\label{tab: results on Office31 ViT}
\begin{center}
\renewcommand{\tabcolsep}{0.65pc}
\begin{tabular}{l|cccccc|c}
\toprule
\textbf{Office-31} & A$\rightarrow$W& D$\rightarrow$W& W$\rightarrow$D& A$\rightarrow$D& D$\rightarrow$A& W$\rightarrow$A & Mean \\
\midrule
ViT-B~\citep{dosovitskiy2021an} & 88.7 & 99.0 & \textbf{100.0} & 86.9 & 78.9 & 79.7 & 88.9\\
CDTrans$^{*}$~\citep{xu2022cdtrans} & 96.7 & 99.0 & \textbf{100.0} & \textbf{97.0} & 81.1 & 81.9 & 92.6 \\
TVT~\citep{yang2023tvt} & 96.4 & 99.4 & \textbf{100.0 }& 96.4 & 84.9 & 86.1 & 93.9 \\
SSRT~\citep{SSRT2022} & 97.7 & 99.2 & \textbf{100.0} & 98.6 & 83.5& 82.2 & 93.5 \\
AMPT~\citep{AMPT2023} &97.1 & 98.9 & \textbf{100.0} & 97.4 & 81.2 & 82.0 & 92.8\\
PMTrans~\citep{PMTrans2023} & \textbf{99.1} & \textbf{99.6} & \textbf{100.0} & \textbf{99.4} & \textbf{85.7} & 86.3 & \textbf{95.0} \\
\textbf{DoT} & 96.6 & 99.4 & \textbf{100.0} & 96.7 & 85.1 & \textbf{86.8} & 94.1 \\
\bottomrule
\end{tabular}
\end{center}
\end{table*}

VisDA-2017 is a challenging dataset due to class imbalance and large virtual-to-real domain discrepancy. Table~\ref{tab:results-visda} shows the accuracies of each class and mean accuracy. It can seen that \textit{truck} is a common barrier for all the methods. DeepJDOT performs best on the \textit{truck} class while its mean accuracy is only 77.4\%. Perhaps the class imbalance problem disturbs its optimal transport map in the joint space of features and class labels. DoT achieves rank-2 accuracy in the~\textit{truck} class and the highest mean accuracy. It indicates that achieving locality consistency is reliable in dealing with the class imbalance problem. TPN simultaneously reduces a global and class-wise domain discrepancy, and DMP learns the discriminative structure of target domains through a fine-grained manifold metric alignment. We see that DoT outperforms all the remaining methods with the highest mean accuracy 84.9\%, which is 4.5\% and 5.6\% higher than TPN and DMP, respectively. These results indicate that DoT is helpful in reducing significant domain discrepance.

Table~\ref{tab:results on DomainNet} shows the classification accuracy of 30 transfer tasks on DomainNet. The mean accuracy of ResNet-101 is only 26.6\% since DomainNet is a large dataset with high difficulty and diversity. SCDA+MDD improves the mean accuracy of MDD by proposing a pair-wise adversarial domain alignment loss of prediction distributions. It can be seen that DoT outperforms SCDA+MDD by 5.2\%. SDAT improves the performance of CDAN by enhancing the smoothness w.r.t. task loss in the adversarial training process. We notice that the mean accuracy of CDAN+SDAT is based on 25 tasks. Under the same calculation rule, the mean accuracy of DoT is 49.0\%, which outperforms CDAN+SDAT by 6.9\%. These results demonstrate that our DoT has a strong generalization ability for the challenging UDA problem.

\textbf{Transformer-based Results}. To verify the generalization of our proposal, we further evaluate DoT by employing ViT-B as the backbone. The comparison methods include CDTrans~\citep{xu2022cdtrans}, WinTR~\citep{ma2021WinTR}, SSRT~\citep{SSRT2022}, CDAN+SDAT~\citep{SDAT2022}, TVT~\citep{yang2023tvt}, AMPT~\citep{AMPT2023}, and PMTrans~\citep{PMTrans2023}.

\begin{table*}[t]\small
\caption{Component analysis of the proposed DoT model on different datasets (\%).}
 \label{tab:ablation_experiment}
 \renewcommand{\tabcolsep}{1.2pc}
 \begin{center}
 \begin{tabular}{cccc|cc|c|cc}
 \toprule
 \multicolumn{4}{c|}{}  &  \multicolumn{2}{c|} {ImageCLEF} & VisDA-2017 &\multicolumn{2}{c} {Office-31}\\
 $ \mathcal{A}tt.$        & $\mathcal{L}_s$       & $\mathcal{L}_t$    & $\mathcal{L}_{ent}$     & C$\rightarrow$I & I$\rightarrow$C & S$\rightarrow$R& A$\rightarrow$W & W$\rightarrow$A\\
 \midrule
  DA &  &  &                          & 92.8 & 96.0 & 81.4 & 93.6 & 82.2 \\
  DA & $\checkmark$ & &               & 94.3 & 97.3 & 82.5 & 93.5 & 82.7 \\
  DA & & $\checkmark$ & $\checkmark$  & 94.7 & 97.3 &  83.3 & 93.8 & 82.7 \\
  DA & $\checkmark$&$\checkmark$&$\checkmark$& \textbf{95.3} &\textbf{97.8}  & \textbf{84.9}& \textbf{94.8} & \textbf{83.1} \\
  PA & $\checkmark$ & $\checkmark$ & $\checkmark$ & 93.7 & 96.8 & 68.8 & 85.5 & 69.4   \\
 \bottomrule
 \end{tabular}
 \end{center}
\end{table*}

Table~\ref{tab: results on Office31 ViT} reports the results on Office-31. Compared with the ViT-B baseline, DoT achieves a much higher mean accuracy, which shows that building the interactions between cross-domain samples ensures further improvement in addressing the domain shift. With the help of domain-level and patch-level adversarial learning, TVT improves the transferability of the attention mechanism by assigning weights to different patches. Though the framework is simple, DoT achieves a better mean accuracy than TVT, which indicates the effectiveness of our DoT.

Table~\ref{tab:results on Officehome ViT} reports the results on Office-Home and VisDA-2017. The mean accuracy of DoT is 17.8\% higher than that of ViT-B on VisDA-2017, which indicates the effectiveness of DoT in reducing domain discrepancy. Besides, WinTR is based on DeiT-S while other methods are based on DeiT/ViT-B. DeiT-S is a smaller model than DeiT/ViT-B. It might be the reason that other methods achieve much better performance than WinTR on Office-Home. Unlike the CNNs-based results, DoT outperforms CDAN+SDAT on Office-Home, which shows the promising potential of ViT.
Since Office-Home has 65 categories, the self-training strategy in SSRT and the intermediate domain in PMtrans play a vital role in boosting performance. It might be the reason that SSRT and PMtrans perform better than DoT on Office-Home. It is worth noting that the mean accuracy of DoT surpasses all the methods on VisDA-2017, including SSRT and PMtrans. These results show that DoT has an advantage in dealing with large-scale datasets.

\begin{figure}[t]
    \centering
    \includegraphics[scale=0.9]{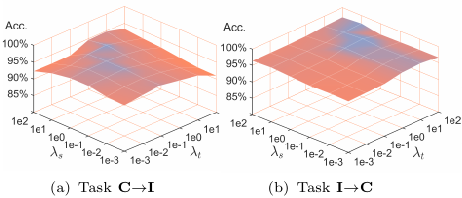}
    \caption{Parameter analysis curves of $\lambda_{s}$ and $\lambda_{t}$ on ImageCLEF.}
    \label{fig:hyper-param}
\end{figure}

Table~\ref{tab:results on DomainNet(ViT)} report the results on DomainNet. It can be seen that DoT achieves the highest mean accuracy 52.7\%. Besides, DoT outperforms the ViT-B baseline by 14.6\%, which further validates that DoT is effective in dealing with the domain discrepancy in UDA. With qdr being the more challenging target domain, while the others serve as source domains, our DoT achieves a mean accuracy of 28.4\%. In comparison, SSRT and CDTrans only achieve mean accuracies of 13.7\% and 17.5\%, respectively. Further, we can notice that the mean accuracy of DoT outperforms PMTrans and SSRT on the two challenging datasets, i.e., VisDA-2017 and DomainNet. These results demonstrate the effectiveness of DoT in handling these difficult UDA scenarios.

Compared with the CNNs-based results in Table~\ref{tab:results-clef-31}-\ref{tab:results-home}, the transformer-based results are much better, especially on the challenging datasets Office-Home and VisDA-2017. This is attributed to the strong transferable features obtained by transformers. DoT based on ResNet or ViT always achieves the best mean accuracies on Office-31, Office-Home and VisDA-2017. Thus, we conclude that DoT can promote classification performance under different backbones.

\begin{figure*}[t]
    \centering
    \includegraphics[scale=0.89]{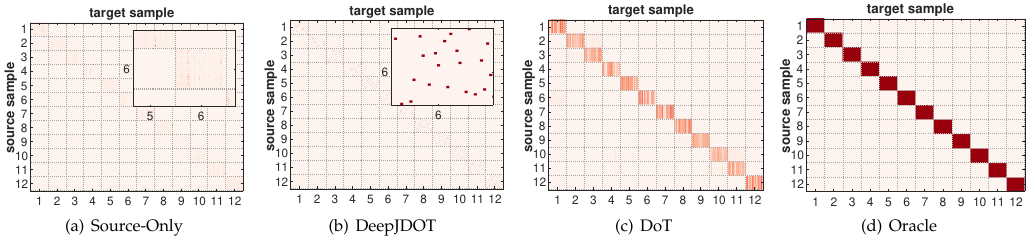}
    \caption{Heat-maps of attention matrices on ImageCLEF task I$\rightarrow$C. Darker colors represent larger values. Best viewed in color.}
    \label{fig:heatmap_imageclef-i2c}
\end{figure*}

\section{Method Analysis}\label{sect:results-analysis}
In this section, for the generality of analysis, we conduct method analysis on DoT with ResNet-50/101 as the backbone. Further analysis can be found in the supplementary material.

\subsection{Ablation study}
\subsubsection{Component Analysis}
The objective function of DoT can be roughly decomposed into three components, i.e., the cross-entropy term $\mathcal{L}_{tce}$ based on the domain-level attention, the supervised local structure learning term $\mathcal{L}_s$, and the unsupervised local structure learning terms $\mathcal{L}_t$ and $\mathcal{L}_{ent}$. To explore the impacts of these parts, we conduct an ablation study and show the results in Table~\ref{tab:ablation_experiment}. Note that $\mathcal{L}_{tce}$ has been replaced by $\mathcal{A}tt.$, which is short for Attention, to distinguish the patch-level attention (PA) in ViT and our domain-level attention (DA).

The accuracies shown in the first row indicate that the domain-level attention, which is implemented via the cross-entropy term $\mathcal{L}_{tce}$, nearly achieves SOTA performance. These results validate the theoretical connection, as shown in Section \ref{subsec:connection_OT_DoT}, that our domain-level attention is helpful in reducing domain discrepancy. Notice that both the supervised structure learning strategy (with $\mathcal{L}_s$) and the unsupervised strategy (with $\mathcal{L}_t$ and $\mathcal{L}_{ent}$) can improve the basic model using $\mathcal{L}_{tce}$ only. We also see that DoT with all the loss terms consistently achieves the best performance. It suggests that the local structure regularization terms and domain-level attention mechanism can benefit from each other in learning transferable and discriminative features.

Compared with CNNs backbones, the ViT-B backbone shows promising performance. DoT based on ViT-B based improves the ViT-B baseline with higher mean accuracies on all the datasets, which shows that PA and DA play a complementary role in the learning process. To validate the necessity of DA fairly, we consider PA and DA at the same position (latent features), where PA employs single-head attention like that in ViT. PA focuses on establishing correspondence learning for each feature vector individually, while DA characterizes dependencies across feature vectors. The results are shown in the bottom row of Table~\ref{tab:ablation_experiment}. We can see that PA with the locality regularization terms performs well on ImageCLEF, but its performance is relatively weaker on Office-31 and VisDA-2017. This could be attributed to the challenge of capturing cross-domain interactions effectively with the PA module. Office-31 and VisDA-2017 involve substantial domain shifts, which makes it crucial to model the relations between cross-domain samples. In comparison, DoT consistently outperforms the new combination approach with a significant margin on more demanding tasks. These results indicate that domain-level attention exhibits a clear advantage over patch-level attention in dealing with UDA.

\begin{table}[t]
\caption{The performance of DoT based on PCA and LPP (\%).}
 \label{tab:ablation_DoT_PCA_LPP}
 \renewcommand{\tabcolsep}{0.28pc}
 \centering
 \begin{tabular}{c|cc|c|cc}
 \toprule
 \multirow{2}{*}{Principle}  & \multicolumn{2}{c|}{ImageCLEF} & VisDA-2017 &\multicolumn{2}{c}{Office-31}\\
& C$\rightarrow$I & I$\rightarrow$C & S$\rightarrow$R& A$\rightarrow$W & W$\rightarrow$A\\
 \midrule
 PCA & 92.3 & 96.7 & 82.0 & 93.3 & 82.5 \\
 LPP	& 95.3 & 97.8 &	84.9 & 94.8 & 83.1 \\
 \bottomrule
 \end{tabular}
\end{table}

\begin{figure*}[t]
    \centering
    \includegraphics[scale=0.9]{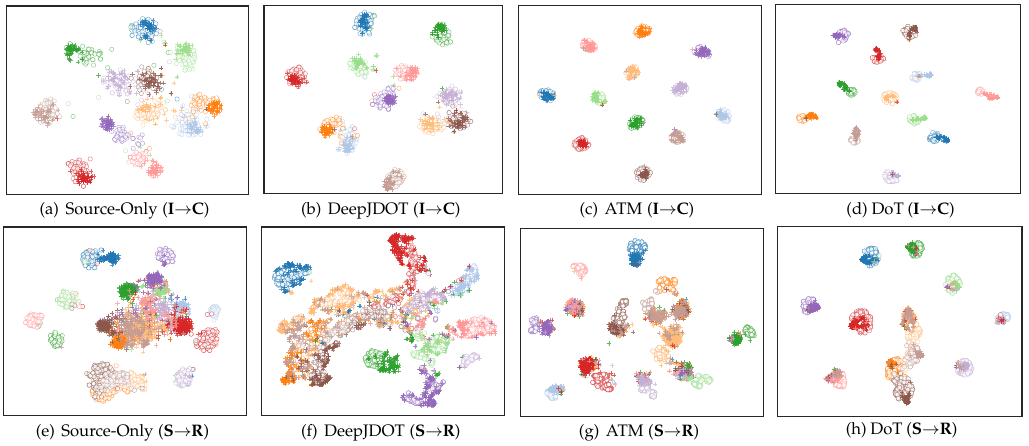}
    \caption{T-SNE visualization of features generated by Source-Only, DeepJDOT, ATM and DoT on ImageCLEF task I$\rightarrow$C and VisDA-2017 task S$\rightarrow$R, respectively. Here, ``o'' denotes source domain and ``+'' denotes target domain. Each color denotes one class. Best viewed in color.}
    \label{fig:tsne}
\end{figure*}

\subsubsection{Parameter Analysis}
We investigate the parameter sensitivity of $\lambda_{s}$ and $\lambda_{t}$ on ImageCELF to understand their impact on promoting locality consistency. Parameters $\lambda_{s}$ and $\lambda_{t}$ act on the source locality preserving loss $\mathcal{L}_s$ and target locality preserving loss $\mathcal{L}_t$, respectively. The parameters $\lambda_{t}$ and $\lambda_{s}$ are searched from $\{1e$-3, 1$e$-2, 1$e$-1, 1$e$0, 1$e$1, 1$e2\}$. Besides, parameter $\lambda_{ent}$ for the entropy loss $\mathcal{L}_{ent}$ is fixed as 0.001. As shown in Fig.~\ref{fig:hyper-param}, we can see that the accuracies will decrease with smaller values of $\lambda_{t}$, which demonstrates that exploring the intra-class compactness of the target domain can promote better adaptation performance. Besides, the accuracy decreases slowly among the peak area, which validates that DoT is stable under different parameter values of the local structure regularization.

\subsubsection{Principle Comparison}

In DoT, the local regularization terms are based on the LPP algorithm since it can satisfy the requirements of both supervised learning and unsupervised learning. Principle Component Analysis (PCA) learns the representations via maximizing the covariance, which characterizes the global structure and only serves as an unsupervised learning principle. To validate the superiority of LPP, we evaluate DoT by replacing the LPP-based regularization terms with PCA, i.e., maximizing the covariance of target features and covariance of the transformed source features. The results are shown in Table~\ref{tab:ablation_DoT_PCA_LPP}. We can observe that DoT based on LPP indeed ensures significantly higher accuracies on different datasets. Thus, LPP is a preferable principle for exploring local structures.

\subsection{Visualization and Analysis}
\subsubsection{Locality Consistency}
To explore the sample correspondence across domains, we visualize the attention matrices of different methods on ImageCLEF task \textbf{I}$\rightarrow$\textbf{C}. The attention matrix $\mathbf{A}$ defined in Eq.~\eqref{Eq:softmax-DoT} characterizes the pair-wise similarities between the source and target features. Since the attention matrix is theoretically connected with the barycenter map induced by the transport plan in Eq.~\eqref{eq:barycentric-map}, we regard the row-normalized optimal transport plan $\mbox{diag}(\bm{\gamma}^{\ast}\mathbf{1}_{n_t})^{-1}\bm{\gamma}^{\ast}$ as the attention matrix $\mathbf{A}$ for DeepJDOT. Oracle indicates the ideal case for alignment. The oracle matrix entry equals $1$ if the sample pairs within the same class; otherwise, it is $0$. Then, we normalize the non-zero elements of each row. Overall, all the visualization matrices in Fig.~\ref{fig:heatmap_imageclef-i2c} are row-normalized with the same scale. The block-diagonal color will be darker since samples are sorted by their classes. In Fig.~\ref{fig:heatmap_imageclef-i2c}(a), the correspondence between class-relevant samples is weak since Source-Only is a model without adaptation. Thus, it is natural that DeepJDOT and DoT have more obvious block-diagonal structures than Source-Only. Since DeepJDOT is based on the Kantorovich problem, the optimal transport plan $\bm{\gamma}^{\ast}$ is a sparse matrix. It is the reason that we can observe much darker diagonal dots in Fig.~\ref{fig:heatmap_imageclef-i2c}(b). Different from DeepJDOT, DoT captures class-relevant sample correspondence across domains via the domain-level attention and structure learning strategy. In Fig.~\ref{fig:heatmap_imageclef-i2c}(c), we can observe more obvious diagonal blocks instead of darker diagonal dots, which ensures that DoT can ensure locality consistency across domains. Though there is room for improvement compared with the Oracle case in Fig.~\ref{fig:heatmap_imageclef-i2c}(d), DoT has achieved the class-wise domain alignment without utilizing the soft labels or pseudo-labels of the uncertain target samples.

\begin{figure}[t]
    \centering
    \includegraphics[scale=0.36]{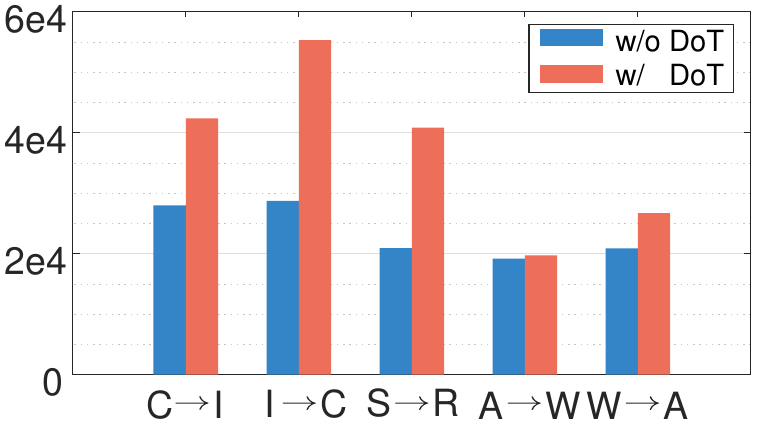}
    \caption{The F-norm of the cross-covariance matrix based on the embedded features.}
    \label{fig:f-norm}
\end{figure}

\subsubsection{Domain Correlation}
In statistics, the cross-covariance matrix can measure the correlation degree between two random variables. Mathematically, the cross-covariance matrix $\Sigma_{\mathbf{F}^s \mathbf{F}^t}$ is estimated by $\mathbb{E}[(\mathbf{F}^s -\mathbb{E}[\mathbf{F}^s])(\mathbf{F}^t -\mathbb{E}[\mathbf{F}^t])]$, and similarly for $\Sigma_{\hat{\mathbf{F}}^s \mathbf{F}^t}$. Here, we compute $\Sigma_{\mathbf{F}^s \mathbf{F}^t}$ based on the embedded features $\mathbf{F}^s$ and $\mathbf{F}^t$ from the Source-Only model, and obtain $\Sigma_{\hat{\mathbf{F}}^s \mathbf{F}^t}$ based on the features $\hat{\mathbf{F}}^s$ and $\mathbf{F}^t$ from DoT. We compute the Frobenius-norm (F-norm) of the cross-covariance matrices to compare the dependency between the learned domain-invariant features. Since F-norm characterizes the global correlation degree for the cross-covariance matrix, a larger norm indicates a larger correlation degree between the source and target domains.

The experiments are conducted on ImageCLEF, VisDA-2017, and Office-31. In Fig.~\ref{fig:f-norm}, we denote $\|\Sigma_{\mathbf{F}^s \mathbf{F}^t}\|_F$ as w/o DoT and $\|\Sigma_{\hat{\mathbf{F}}^s \mathbf{F}^t}\|_F$ as w/ DoT. We see that DoT has larger F-norms on all of these UDA tasks. It means that the sample correlations across different domains are improved. Considering the classification performance obtained by DoT, we think the larger norms are closely related to the model's generalization ability. This is consistent with the empirical results and important conclusions shown by \citep{xu2019larger}. The results validate that DoT achieves domain alignment and then improves the mode's generalization performance by capturing the sample correspondence between domains.

\begin{figure}[t]
    \centering
    \includegraphics[scale=0.38]{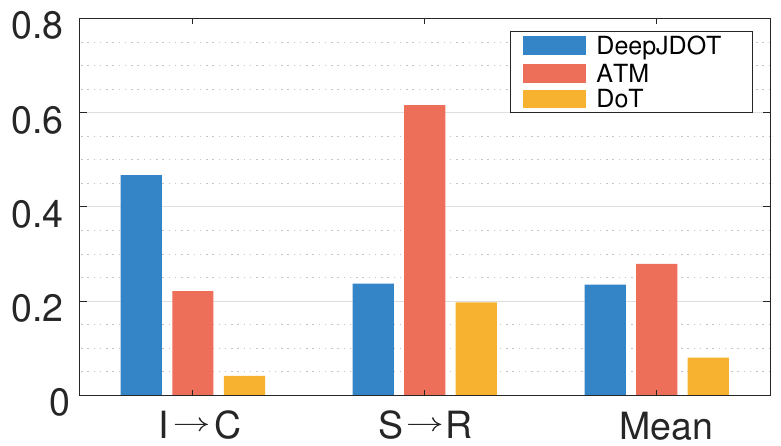}
    \caption{$\MC{W}_2$-distance between domains.}
    \label{fig:w2-distance}
\end{figure}

\begin{figure*}[t]
    \centering
    \includegraphics[scale=0.87]{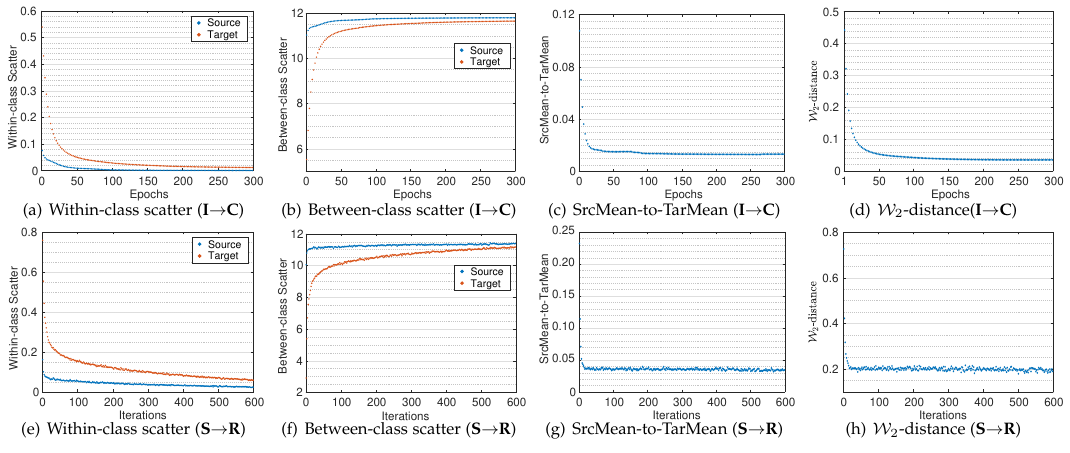}
    \caption{Scatter analysis for the embedded features on ImageCLEF task I$\rightarrow$C and VisDA-2017 task S$\rightarrow$R. $\hat{\mathbf{F}}^s$ is used for the source domain, and $\mathbf{F}^t$ for the target domain. Best viewed in color.}
   \label{fig:dist}
\end{figure*}

\subsubsection{Wasserstein Metric}
In the part of theoretical derivation, we establish the estimation error and generalization bound based on the $\MC{W}_2$-distance. To verify the rationality and effectiveness of these theoretical results, $\MC{W}_2$-distance is calculated for the features obtained by different methods and then presented in Fig.~\ref{fig:w2-distance}. Specifically, ImageCLEF task I$\rightarrow$C and VisDA-2017 task S$\rightarrow$ R are still used for simplicity. The features obtained by DeepJDOT, ATM, and DoT are input into the Sink-horn's iterative algorithm, respectively, and then the $\MC{W}_2$-distances are obtained. We can see that on both learning tasks, the $\MC{W}_2$-distance of DoT is smaller than those obtained by DeepJDOT and ATM, and the comparison on S$\rightarrow$R task is more obvious. ``Mean'' represents the average $\MC{W}_2$-distance corresponding to two groups of tasks, and DoT still keeps the superiority on distance value. Fig.~\ref{fig:w2-distance} shows that DoT effectively reduces the domain discrepancy without introducing the complex iteration computation. It also verifies the effectiveness of the learning theory and model based on the $\MC{W}_2$-distance again.

\subsubsection{Feature Visualization}
To intuitively evaluate the aligned features in lower-dimensional subspace, we use t-SNE~\citep{maaten2008visualizing} to visualize the features of Source-Only, DeepJDOT, ATM, and DoT on ImageCLEF task I$\rightarrow$C and VisDA-2017 task S$\rightarrow$R. The results are shown in Fig.~\ref{fig:tsne}. For VisDA-2017, we randomly selected 2000 samples across 12 categories from the source and target domains according to the original category ratio. Fig.~\ref{fig:tsne}(a)-(d) show that DeepJDOT, ATM, and DoT all improve the result of Source-Only by adaptation. Compared with DeepJDOT and ATM, DoT achieves better intra-class compactness since samples with the same class are closer. In Fig.~\ref{fig:tsne}(e), the source features have an obvious discriminative structure while the target features collide into a mess due to the large synthetic-to-real domain gap in VisDA-2017. As shown in Fig.~\ref{fig:tsne}(f), DeepJDOT achieves a pair-wise matching across domains by seeking an optimal transport plan $\bm{\gamma}$ in the joint space of features and labels. However, some aligned classes are not separable by a large margin. Fig.~\ref{fig:tsne}(g) further indicates that the MDD loss guarantees ATM to learn discriminative features. In Fig.~\ref{fig:tsne}(h), we observe that DoT can match the complex structures across domains along with achieving the intra-class compactness on the target domain.

\subsubsection{Scatter Analysis}
DoT learns transferable features by capturing sample correspondence, rather than explicitly modeling domain discrepancy. To verify the domain alignment brought by DoT, we measure four distances between the target and transformed source domains in the training process: within-class scatter, between-class scatter, SrcMean-to-TarMean, and $\mathcal{W}_2$-distance. Within-class scatter is the mean Euclidean distance between all samples and their corresponding class center. Between-class scatter computes the mean Euclidean distance between all class centers and the domain center, which is the average over samples from the target and transformed source domains. SrcMean-to-TarMean is the mean Euclidean distance between the source and target class centers. $\mathcal{W}_2$-distance is the 2-Wasserstein distance between the target and transformed source domains.

\begin{figure}[t]
    \centering
    \includegraphics[scale=0.95]{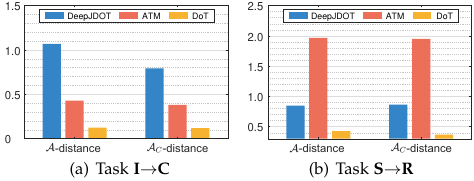}
    \caption{$\mathcal{A}$-distance and $\mathcal{A}_C$-distance between domains.}
    \label{fig:a-distance}
\end{figure}

Fig.~\ref{fig:dist} shows the four distances on ImageCLEF and VisDA-2017. In Fig.~\ref{fig:dist}(a), we can observe that the target domain has a much larger within-class scatter than the transformed source domain at first, and then reduces to 0, which suggests that DoT brings intra-class compactness. The between-class scatter of the target domain in Fig.~\ref{fig:dist}(b) is firstly much lower than that of the labeled transformed source domain, and then stabilizes at a high-level along with the training process, indicating that DoT achieves a large inter-class separability. The decreasing SrcMean-to-TarMean distance in Fig.~\ref{fig:dist}(c) verifies that DoT can enhance the class-wise domain alignment via locality consistency. The gradually decreasing $\mathcal{W}_2$-distance in Fig.~\ref{fig:dist}(d) demonstrates that DoT minimizes the generalization upper bound with the empirical Wasserstein distance. Similar curves are shown in Fig.~\ref{fig:dist}(e)-(h) on the S$\rightarrow$R task. Generally speaking, all these results indicate that DoT successfully learns locality consistency with the domain-level attention mechanism.

\subsubsection{Domain Discrepancy}
$\mathcal{A}$-distance~\citep{ben2010theory} has been widely used to measure domain discrepancy. $\mathcal{A}$-distance is estimated by $d_\mathcal{A} = 2(1-2\varepsilon)$, where $\varepsilon$ denotes the test error of a classifier trained for distinguishing the source and target domains. Thus, the smaller $\mathcal{A}$-distance, the better global domain alignment. To measure class-wise domain discrepancy, we estimate $\mathcal{A}_C$-distance by $d_{\mathcal{A}_C}\!=\!\mathbb{E}[d_{\mathcal{A}_c}]$, where $d_{\mathcal{A}_c}$ is the $\mathcal{A}$-distance for samples from the same class but different domains, and $\mathbb{E}[\cdot]$ is the expectation on the target domain. The smaller $\mathcal{A}_C$-distance, the better class-wise domain alignment. We provide the $\mathcal{A}$-distance and $\mathcal{A}_C$-distance of DeepJDOT, ATM and DoT on ImageCLEF and VisDA-2017, respectively. In Fig.~\ref{fig:a-distance}, ATM has smaller $\mathcal{A}$-distance and $\mathcal{A}_C$-distance than DeepJDOT on task I$\rightarrow$C while contrary performance on task S$\rightarrow$R. We also notice that DoT achieves the smallest $\mathcal{A}$-distance and $\mathcal{A}_C$-distance on both tasks, which validates that DoT is effective in reducing domain discrepancy. Besides, $\mathcal{A}_C$-distances are smaller than $\mathcal{A}$-distances. This can be attributed to the locality consistency learned by the domain-level attention mechanism in DoT.

\section{Conclusion}\label{sect:conclusion}

In this paper, we propose a novel method called Domain-transformer (DoT) for UDA. We define the domain-level attention inspired by the core attention mechanism of Transformer. With the structure learning strategy, DoT achieves discriminative feature extraction and transfer across domains. Theoretically, we connect the domain-level attention with OT theory and derive generalization error bounds with the Wasserstein geometry. Under the OT-like map, DoT ensures locality consistency across domains and avoids explicit divergence optimization. DoT can be applied to backbones based on CNNs or vision transformers, and it can be optimized in an end-to-end manner. In our future work, we plan to employ the merits of the transformers on more general problems, including meta-learning.

\bmhead{Acknowledgments}
This work is supported in part by National Natural Science Foundation of China (Grant No. 61976229, 62376291), in part by Guangdong Basic and Applied Basic Research Foundation (2023B1515020004), in part by the Open Research Projects of Zhejiang Lab (Grant 2021KH0AB08), in part by Guangdong Province Key Laboratory of Computational Science at Sun Yat-sen University (2020B1212060032), in part by Key Laboratory of Machine Intelligence and Advanced Computing, Ministry of Education, and in part by the Hong Kong Innovation and Technology Commission (InnoHK Project CIMDA).

\section*{Declarations}
\begin{itemize}
\item The datasets adopted can be requested and downloaded through the following links:
\begin{itemize}
\item[-]ImageCLEF~\href{https://www.imageclef.org/2014/adaptation}{link}
\item[-]Office-31~\href{https://faculty.cc.gatech.edu/~judy/domainadapt/#datasets_code}{link}
\item[-]Office-Home~\href{https://www.hemanthdv.org/officeHomeDataset.html}{link}
\item[-]VisDA-2017~\href{https://ai.bu.edu/visda-2017/}{link}
\item[-]DomainNet~\href{https://ai.bu.edu/M3SDA/#dataset}{link}
\end{itemize}
\item The code will be publicly open upon acceptance.
\end{itemize}

\bibliography{DoT_abbre}
\end{document}